\documentclass{article}

\usepackage[preprint]{neurips_2026}
\usepackage{graphicx} 
\usepackage{makecell}
\usepackage{booktabs}
\usepackage{multirow}
\usepackage{amsmath}
\usepackage{amssymb}
\usepackage{subcaption}
\newtheorem{definition}{Definition}
\newtheorem{lemma}{Lemma}
\newtheorem{proposition}{Proposition}
\newtheorem{proof}{Proof}

\usepackage[utf8]{inputenc} 
\usepackage[T1]{fontenc}    
\usepackage{hyperref}       
\usepackage{url}            
\usepackage{booktabs}       
\usepackage{amsfonts}       
\usepackage{nicefrac}       
\usepackage{microtype}      
\usepackage{xcolor}         

\title{LAQuant: A Simple Overhead-free Large Reasoning Model Quantization by Layer-wise Lookahead Loss}

\author{%
  Euntae Choi\thanks{Equal contribution.} \\
  Seoul National University \\
  \texttt{euntae.choi175@gmail.com} \\
  \And
  Sumin Song\footnotemark[1] \\
  Seoul National University \\
  \texttt{songsm921@snu.ac.kr} \\
  \And
  Sungjoo Yoo\thanks{Corresponding author.} \\
  Seoul National University \\
  \texttt{sungjoo.yoo@gmail.com} \\
}

\begin{document}

\maketitle

\begin{abstract}
Large reasoning models (LRMs) reach competition-level math and coding accuracy via long autoregressive decoding, making per-token decoding cost a primary deployment concern. Weight quantization is the standard tool for acceleration, but representative recipes --- including state-of-the-art end-to-end (E2E) QAT --- lose accuracy on long-decoding reasoning benchmarks despite preserving perplexity and short-decode accuracy. Through a systematic gradient-direction analysis, we identify two factors driving this gap: (i) KV-cache fidelity preservation under the QAT loss, which E2E supervision attenuates via the softmax Fisher metric; and (ii) Hessian-subspace alignment between calibration data and the deployment distribution. We propose LookAhead Quantization (LAQuant), a layer-wise weight-only QAT method that addresses both factors without online-transform overhead by combining reasoning-domain calibration with a one-layer lookahead loss whose implicit cross-layer co-adaptation preserves the next-layer residual stream. For Qwen3-4B under W3G128 quantization, LAQuant improves AIME25 Pass@1 over ParoQuant by 15.11\,pp (1.93\,pp over ParoQuant++ at matched calibration) while achieving a 3.42$\times$ decoding speedup over FP16 on RTX A6000, compared with ParoQuant's 3.01$\times$. Code will be made available at \url{blind_review}.
\end{abstract}

\section{Introduction}
\label{sec:intro}
Large reasoning models (LRMs)~\cite{qwen3,grpo,deepseekr1}, powered by reinforcement learning~\cite{grpo,dapo} and test-time scaling~\cite{tts}, have recently achieved strong performance on challenging STEM and coding tasks. Since accuracy scales with reasoning-trace length, per-token decoding efficiency is a critical deployment concern.

Quantization is a common and essential technique for efficient LLM inference~\cite{atom,qserve,quarot,gptq,awq,omniquant,qtip,vptq}, and its importance is amplified under the long-decoding workloads of LRMs. For conventional LLMs, quantization algorithms have advanced significantly: some methods push weight-only quantization to 2-bit~\cite{bitdistiller,efficientqat,dbllm,vptq} or even binary~\cite{billm}, while others jointly quantize weights, activations, and the KV cache at 4-bit~\cite{quarot,spinquant}. On competitive reasoning benchmarks~\cite{aime2024,aime2025,gpqa,lcb} that require long autoregressive decoding, however, representative methods such as AWQ~\cite{awq}, GPTQ~\cite{gptq}, and EfficientQAT~\cite{efficientqat} are reported to exhibit visible degradation even under 4-bit weight-only quantization, indicating that success on conventional benchmarks does not transfer to long-decode reasoning settings.

In response, recent works have proposed quantization methods specifically targeting LRMs. ParoQuant~\cite{paroquant} combines minimal pairwise rotation with channel scaling to reduce per-layer quantization error, but its non-absorbable rotation introduces per-token inference overhead, which is non-trivial given the long thinking-token budgets typical of LRM deployment. ReasoningQAT~\cite{reasoningqat} chains layer-wise quantization-aware training (QAT) with end-to-end (E2E) QAT under a domain-mixed calibration corpus, improving over post-training methods but still leaving headroom on competition-level benchmarks. Despite these advances, the question of why LRM quantization is more difficult than its conventional counterpart remains open.

The E2E QAT objective optimizes only the final logit distribution, placing no direct constraint on per-layer KV-cache errors --- yet KV-cache fidelity is precisely what long-decoding quality requires, since each generated token re-attends to every prior K and V projection. A gradient-direction analysis identifies two factors driving this difficulty: (i) KV-cache fidelity preservation under the QAT loss, and (ii) Hessian-subspace alignment between calibration data and deployment. We propose \textbf{L}ook\textbf{A}head \textbf{Quant}ization (LAQuant), a layer-wise weight-only QAT addressing both with zero online-transform overhead. We focus on W3G128, where per-layer error and KV-cache compounding are most severe; the analysis and method extend to W4G128. LAQuant pairs a one-layer lookahead loss with a Hessian-aligned calibration corpus: the lookahead loss is an implicit Jacobian-weighted error attribution that couples the seven trainable projections of each layer through the next-layer output, producing cross-layer co-adaptation; the calibration corpus keeps the optimization target faithful to the deployment subspace. Our contributions are summarized as follows:
\begin{itemize}
    \item \textbf{A gradient-direction framework} for LRM weight-only QAT that identifies two factors driving the quantization gap: end-to-end supervision attenuates per-layer learning signals (including for KV-cache directions) via the softmax Fisher metric, and calibration-data choice is governed by activation-Hessian subspace alignment with the deployment distribution.
    \item \textbf{LAQuant}, a unified layer-wise QAT recipe that combines lookahead-based loss reshaping with Hessian-aligned calibration data to jointly address both factors, without any non-absorbable online transformation.
    \item \textbf{State-of-the-art empirical results} on challenging STEM reasoning benchmarks, with complementary improvements over the two leading paradigms: a \textbf{19.38 pp} AIME25 Pass@1 gain over ReasoningQAT on Qwen3-4B W3G128, and a \textbf{3.42$\times$} decoding speedup over FP16 on RTX A6000 GPU (vs ParoQuant's 3.01$\times$), establishing a new accuracy--speed Pareto frontier for weight-only quantization of large reasoning models.
\end{itemize}
\vspace{-1.0em}

\begin{figure}[h]
  \centering
  \includegraphics[width=\linewidth]{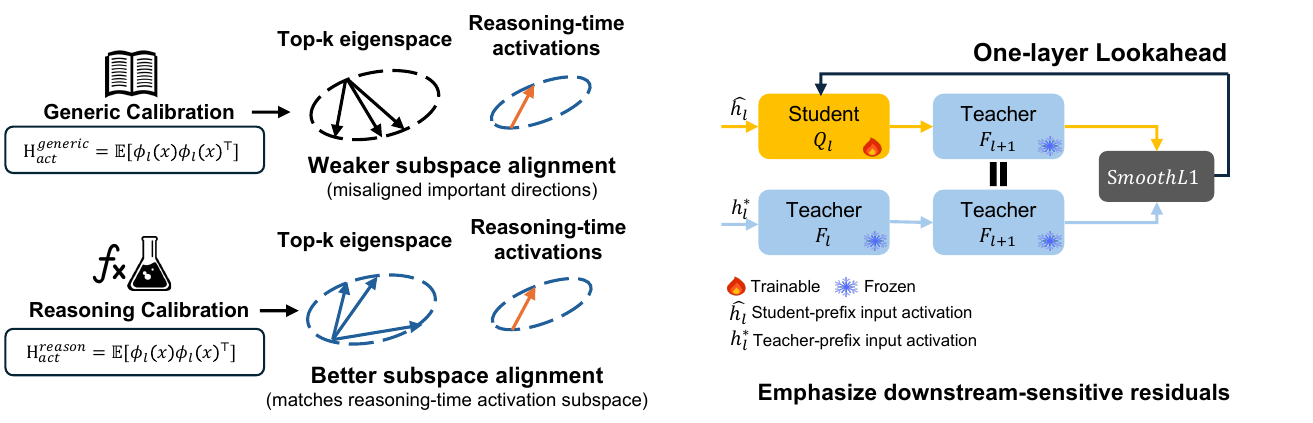}
  \caption{Overview of \textbf{LAQuant}. We combine a one-layer lookahead loss (Section~\ref{sec:lookahead}) with reasoning-domain calibration data (Section~\ref{sec:calibdata}) in a layer-wise QAT pipeline that produces standard weight-only quantized models with no online transformations.}
  \label{fig:overview}
\end{figure}

\section{Preliminaries}

\paragraph{LLM Weight Quantization.}
Linear weight quantization maps a full precision weight $w$ to $b$-bit integer indices $q$, scales $s$, and zeros $z$ as follows:
\begin{equation}
  q \;=\; \mathrm{clip}\!\left(\left\lfloor\tfrac{w}{s}\right\rceil + z,\; 0,\; 2^b - 1\right),
  \qquad
  \hat w \;=\; s \cdot (q - z),
  \label{eq:quant}
\end{equation}
where $\lfloor\cdot\rceil$ denotes round-to-nearest and $\hat w$ is the dequantized weight. Grouped quantization computes the scales and zeros on each consecutive $g$ elements in the rows of the weight, which is denoted W$b$G$g$ (e.g., W3G128 for 3-bit weights with group size 128).

\paragraph{Layer-wise QAT.}
Layer-wise QAT~\cite{efficientqat} optimizes the transformer block-by-block, minimizing the reconstruction error between the FP16 teacher's layer output and the quantized student's layer output. The learnable parameters are the per-layer quantization parameters (scales and zeros) and optionally the weights, clipping bounds~\cite{omniquant}, and channel scales~\cite{smoothquant,awq}; gradients pass through the rounding operation via the straight-through estimator. A full related-work survey appears in Appendix~\ref{app:related_work}.

\section{Motivation}
\label{sec:motivation}

\begin{figure}[!t]
  \centering
  \begin{subfigure}[b]{0.61\linewidth}
    \centering
    \includegraphics[width=\linewidth]{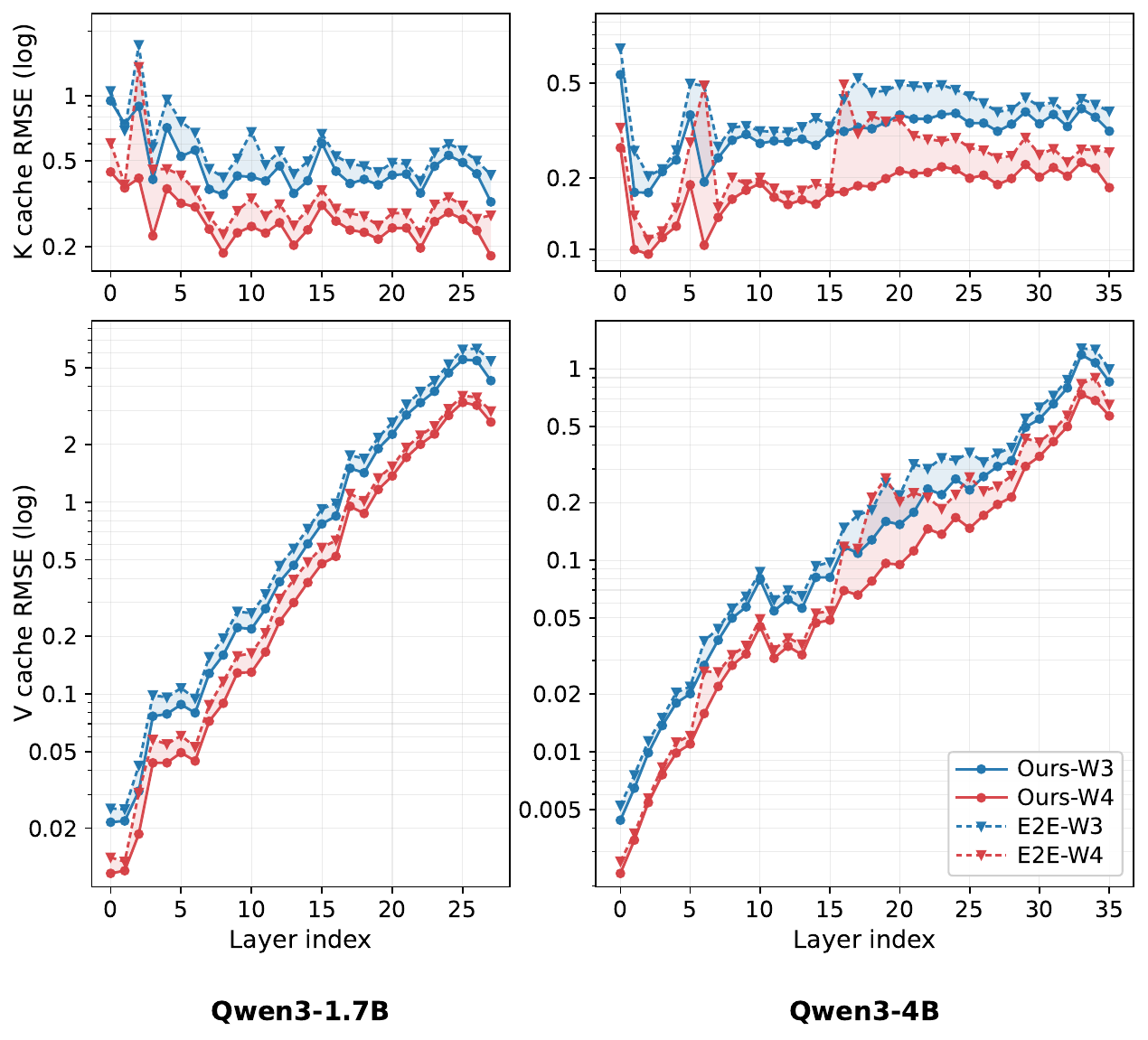}
    \caption{Per-layer K- and V-cache RMSE on AIME25 reasoning traces.}
    \label{fig:kv_aime}
  \end{subfigure}\hfill
  \begin{subfigure}[b]{0.37\linewidth}
    \centering
    \includegraphics[width=\linewidth]{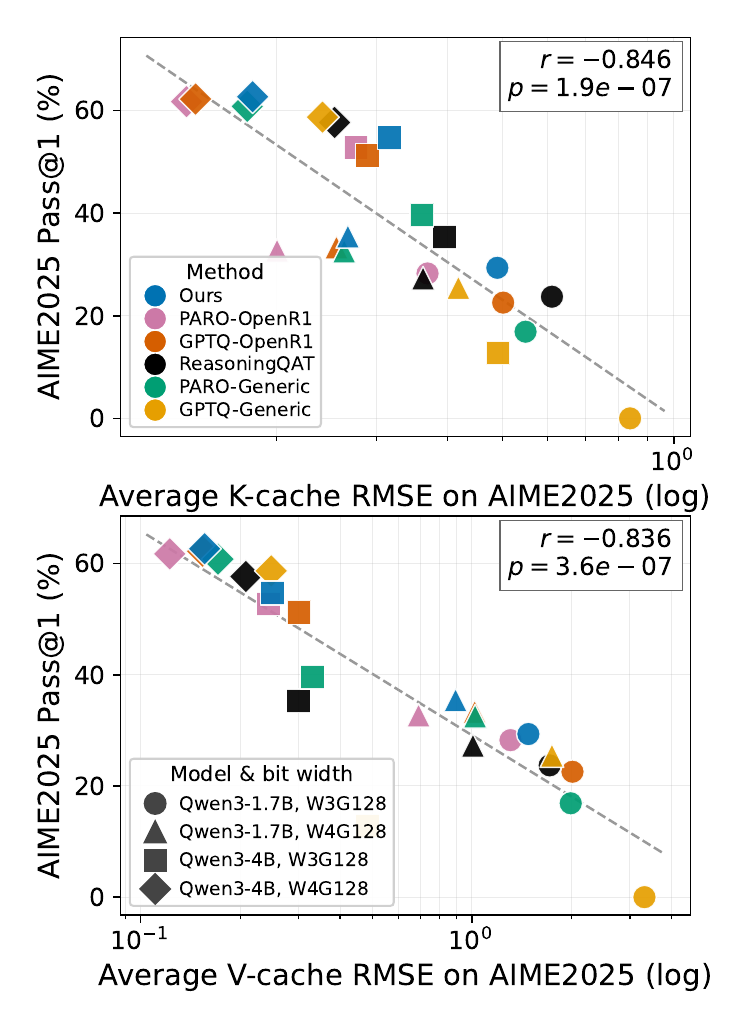}
    \caption{K-cache (top) and V-cache (bottom) RMSE vs.\ AIME25 Pass@1 across 24 quantized checkpoints.}
    \label{fig:kv_correlation}
  \end{subfigure}
  \caption{KV-cache fidelity is strongly associated with reasoning accuracy.
    (a) End-to-end QAT (ReasoningQAT) sacrifices both K- and V-cache fidelity across layers, while LAQuant preserves both. Top row: K-cache RMSE; bottom row: V-cache RMSE.
    (b) Across 24 quantized Qwen3-1.7B/4B checkpoints spanning 6 methods at W3G128 and W4G128, K- and V-cache RMSE on AIME25 traces both strongly anti-correlate with downstream AIME25 Pass@1; the two cache metrics are themselves highly correlated, consistent with a shared KV-cache fidelity factor.}
  \label{fig:motivation_kcache}
\end{figure}

\paragraph{KV-Cache Quality is Critical for Robust LRM Quantization.}
\label{sec:motivation1}
Recent studies~\cite{quantmeetsreasoning,quanthurtsreasoning,qwen3quant,quantlongcontext,reasoningqat,paroquant} report that legacy PTQ and QAT algorithms degrade on long-decoding reasoning tasks such as AIME~\cite{aime2024,aime2025} (competition math) and LiveCodeBench~\cite{lcb} (code generation), which generate up to 32k tokens per response~\cite{grpo,qwen3,deepseekr1,tts,s1}.
Weight quantization induces errors in the output activations of the key and value projections, which populate the KV cache attended by every subsequent token.
Because each generated token attends to all prior keys and values, both K-cache errors (corrupting the attention routing through $\mathrm{softmax}(QK^\top/\sqrt{d})$) and V-cache errors (corrupting the softmax-weighted output mixture) are consequential: they compound along two axes, namely (i) later transformer layers receive increasingly distorted inputs, and (ii) every future token must attend to a degraded KV-cache throughout decoding.

In contrast, conventional LLM quantization~\cite{gptq,awq,bitdistiller,efficientqat,omniquant,smoothquant,quarot,spinquant,rilq} is primarily evaluated on perplexity, zero-shot accuracy, and short-decode tasks, where KV-cache errors have little opportunity to compound along the sequence axis and output-level supervision (SFT loss or KL divergence against a BF16 teacher) proves highly effective; recent advances have accordingly moved toward end-to-end QAT~\cite{bitdistiller,efficientqat,spinquant,rilq}.

A natural hypothesis is that this logit-matching objective should carry over to long-decoding tasks. Our analysis (Figure~\ref{fig:motivation_kcache}a) refutes this: a recent end-to-end QAT model~\cite{reasoningqat} sacrifices both K- and V-cache fidelity across layers to match the BF16 teacher's logits under teacher-forcing, a trade that collapses under long autoregressive decoding where each generated token re-attends to the damaged KV cache. Our layer-wise model instead aligns intermediate hidden representations, achieving $1.25\times$ lower aggregate K-cache and $1.16\times$ lower aggregate V-cache RMSE than the E2E baseline, and recovers the reasoning accuracy quantified in Section~\ref{sec:reasoningresults}. The relationship is general: across 24 quantized checkpoints spanning 6 methods at W3G128 and W4G128, K- and V-cache RMSE (per-token average) on AIME25 traces both strongly anti-correlate with downstream Pass@1 (Pearson $r \approx -0.85$ for both), and are themselves highly correlated ($r = +0.85$), consistent with a shared KV-cache fidelity factor (Figure~\ref{fig:kv_correlation}).

\begin{figure}[!t]
  \centering
  \begin{subfigure}[b]{0.49\linewidth}
    \centering
    \includegraphics[width=\linewidth]{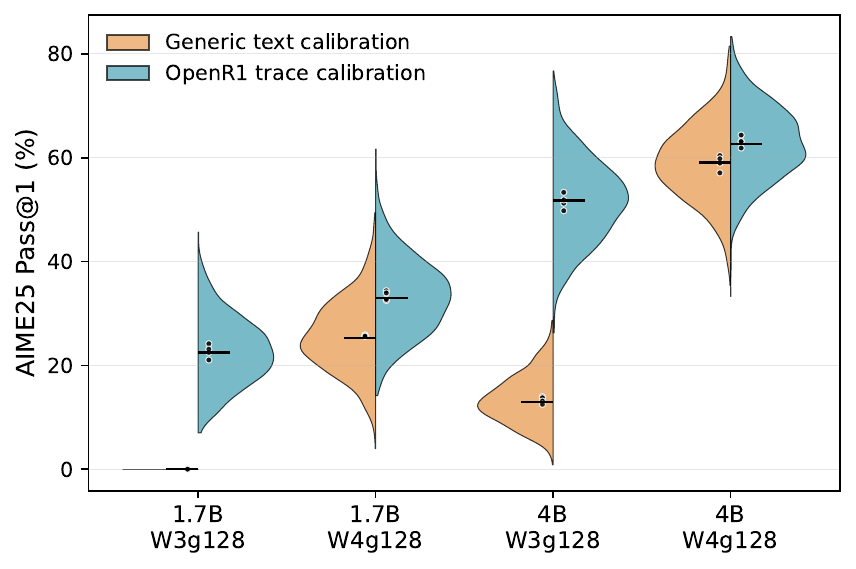}
    \caption{AIME25 Pass@1 with generic vs.\ reasoning calibration.}
    \label{fig:pass1_violin}
  \end{subfigure}\hfill
  \begin{subfigure}[b]{0.49\linewidth}
    \centering
    \includegraphics[width=\linewidth]{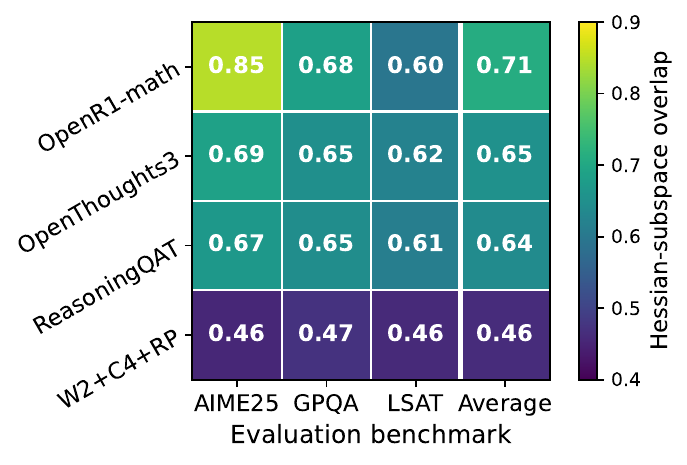}
    \caption{Hessian-subspace overlap (top-64 principal-angle cosine) between calibration corpora and evaluation traces on Qwen3-1.7B.}
    \label{fig:hessian} 
  \end{subfigure}
  \caption{Calibration data choice substantially affects reasoning accuracy.
    (a) Under the same GPTQ algorithm, switching the calibration corpus from a generic pretraining-text mix to DeepSeek-R1 reasoning traces yields large Pass@1 gains.
    (b) A plausible mechanism: reasoning-domain calibration corpora (OpenR1-math, OpenThoughts3, ReasoningQAT mix) induce a Hessian subspace that overlaps far more with the inference distribution than a generic-text mix (WikiText2+C4+RedPajama); the calibration ranking is invariant under $k \in [4, 256]$ (Appendix~\ref{app:topk_sensitivity}).}
\vspace{-1.0em}
  \label{fig:motivation_calib}
\end{figure}

\vspace{-1.0em}
\paragraph{LRM Quantization is Sensitive to Calibration Data, Especially under Challenging Scenarios.}
\label{sec:motivation2}
Existing LLM quantization algorithms typically calibrate on pre-training text corpora (e.g., WikiText2~\cite{wikitext2}, C4~\cite{c4}, RedPajama~\cite{redpajama}), and the generalization capability of modern LLMs has made this choice sufficient for most tasks.
For LRMs, however, calibration data choice is a first-order factor. Figure~\ref{fig:motivation_calib}a compares GPTQ-quantized LRMs under two calibration sources: a \textit{generic} mixture of pre-training text, and a \textit{reasoning} corpus of DeepSeek-R1 traces on OpenR1-Math-220k problems.
Reasoning calibration yields up to $38.45$\,pp improvement on AIME2025 Pass@1 (Qwen3-4B at W3G128 under GPTQ; Table~\ref{tab:res_main_reasoning_std}), with the largest gains at smaller model scales and lower bit widths --- e.g., on Qwen3-1.7B at W3G128, switching corpora moves GPTQ from $0.00$\% to $22.55$\% AIME25 Pass@1 (Appendix~\ref{app:full_reason_tasks}).
Prior works~\cite{reasoningqat,impactofcalib} have reported similar observations but only empirically; we provide a principled explanation based on Hessian-subspace alignment in Section \ref{sec:calibdata}.

\vspace{-1.0em}
\paragraph{Runtime-Overhead-Free Quantization Is Preferable.}
Rotation-based techniques~\cite{quarot,spinquant,paroquant,duquant} reduce per-layer quantization error, but every known formulation includes non-absorbable rotations applied online at every forward pass; even with efficient Hadamard kernels~\cite{fht,hadacore,qserve}, the added per-token latency dominates total cost in long-decoding LRM inference.
Online rotations also complicate deployment process: serving frameworks target general CUDA-core kernels and rarely include Hadamard primitives, and newer rotation variants~\cite{spinquant,duquant} require custom full-matrix transforms.
These considerations motivate LAQuant, which produces standard integer weights with per-group scales and zeros and deploys on any existing weight-only quantized kernel.

\section{LAQuant: Layer-Wise QAT with One-Layer Lookahead}
\label{sec:methodology}

We justify the design of LAQuant through two analytical strands, one for each factor identified in Section~\ref{sec:motivation}: (a) calibration-data alignment via activation Hessians (Section~\ref{sec:calibdata}), and (b) a gradient-direction analysis (Section~\ref{sec:grad_framework}) of supervision level (Section~\ref{sec:supervision}) and loss shape (Section~\ref{sec:lookahead}).

\subsection{Hessian-Subspace Analysis of Calibration Data}
\label{sec:calibdata}

A widespread empirical observation~\cite{quanthurtsreasoning, whatmakeslowbitquantizationawaretraining, reasoningqat, impactofcalib, thinkpruneselectiveselfgenerated} is that LRMs calibrated on reasoning traces substantially outperform those calibrated on generic text. We attribute this gap to an activation--Hessian subspace-alignment phenomenon, formalized as follows. For any calibration distribution \(\mathcal{D}_{\mathrm{cal}}\), let \(x\sim\mathcal{D}_{\mathrm{cal}}\) denote a calibration token sequence, and let \(\phi_\ell(x)\in\mathbb{R}^{d}\) denote the layer-\(\ell\) input activation induced by \(x\). The linearized layer-wise reconstruction objective with respect to the layer weight matrix \(W \in \mathbb{R}^{d_{\mathrm{out}} \times d}\) expands, near its unconstrained optimum \(W^\star\), as
\begin{equation}
  \mathcal{L}_{\mathrm{cal}}(W) \;\approx\; \mathrm{Tr}\!\left[(W - W^\star)\, H_{\mathrm{act}}^{\mathrm{cal}}\, (W - W^\star)^{\top}\right],
  \qquad
  H_{\mathrm{act}}^{\mathrm{cal}} \triangleq \mathbb{E}_{x \sim \mathcal{D}_{\mathrm{cal}}}\!\bigl[\phi_\ell(x)\phi_\ell(x)^{\top}\bigr],
  \label{eq:calib_quadratic}
\end{equation}
so the curvature is governed by the second-order activation statistics of the calibration distribution. Under a finite bit budget, the trained \(\hat W\) allocates precision to high-curvature eigendirections of \(H_{\mathrm{act}}^{\mathrm{cal}}\) and leaves low-curvature directions weakly constrained~\cite{obc, obd, obs, gptq}; if the deployment-activation distribution populates a different subspace, \(\hat W\) is systematically misallocated at inference.

The natural diagnostic for this mismatch is the average top-$k$ principal-angle cosine $\bar c_k(\mathcal{D}_{\mathrm{cal}}, \mathcal{D}_{\mathrm{eval}})$ between the top-$k$ eigenspaces of $H_{\mathrm{act}}^{\mathrm{cal}}$ and $H_{\mathrm{act}}^{\mathrm{eval}}$. LAQuant adopts a reasoning-trace calibration corpus (DeepSeek-R1 traces over OpenR1-Math-220k~\cite{openr1}); its alignment with deployment activations is reported in Figure~\ref{fig:hessian}.

\subsection{Gradient-Direction Framework}
\label{sec:grad_framework}

\begin{definition}[Per-layer quantization residual]
\label{def:residual}
For a quantized layer $Q_\ell$ with learnable parameters $\theta_q^{(\ell)}$, trained against teacher layer $F_\ell$, the per-layer quantization residual is
\begin{equation}
  \varepsilon_\ell(\theta_q^{(\ell)}) \;\triangleq\; Q_\ell\bigl(\hat h_\ell;\, \theta_q^{(\ell)}\bigr) - F_\ell(h^\star_\ell),
  \label{eq:residual}
\end{equation}
where $\hat h_\ell$ and $h^\star_\ell$ are the student-prefix and teacher-prefix layer-$\ell$ inputs respectively, with preceding student layers frozen and quantized.
\end{definition}

The residual is a high-dimensional vector in \(\mathbb{R}^{B \cdot T \cdot d_{\mathrm{hidden}}}\), whose downstream effect depends not only on its magnitude but also on its direction. Under a first-order linearization, components seen by sensitive downstream projections, such as the \(K\)-projection in layer \(\ell+1\), can perturb the attention keys and affect future-token attention routing. By contrast, components in directions weakly observed by the downstream Jacobian have limited first-order impact.

For any QAT loss $\mathcal{L}$ admitting a quadratic Taylor expansion in $\varepsilon_\ell$ near the unconstrained optimum, the gradient takes the form
\begin{equation}
  \nabla_{\theta_q^{(\ell)}}\, \mathcal{L} \;=\; J_\theta^{\top}\, M\, \varepsilon_\ell \;+\; O\bigl(\|\varepsilon_\ell\|^{2}\bigr),
  \qquad J_\theta \triangleq \partial \varepsilon_\ell / \partial \theta_q^{(\ell)},
  \label{eq:grad_form}
\end{equation}
where $M \succeq 0$ is a \emph{local} metric on the residual space, determined by the choice of loss together with the computation graph through which the residual is evaluated and the linearization point. The metric $M$ specifies how the residual's directional components are weighted on their way into the gradient; the next two subsections instantiate Eqn.~\eqref{eq:grad_form} for LAQuant's two loss-axis design decisions.

\subsection{Supervision Level: Layer-Wise over End-to-End}
\label{sec:supervision}

\begin{proposition}[Loss-induced metrics for E2E and layer-wise QAT; proof in Appendix~\ref{app:fisher_bound}]
\label{prop:metrics}
Under the local Gauss--Newton approximation around $\varepsilon_\ell = 0$, end-to-end QAT against the teacher's softmax with forward-KL loss yields
\begin{equation}
  M_{\mathrm{E2E}} \;=\; \mathcal{J}^{\top}\, H_{\mathrm{KL}}\, \mathcal{J},
  \qquad
  H_{\mathrm{KL}} \;=\; \mathrm{diag}(p^\star) - p^\star (p^\star)^{\top},
  \label{eq:M_E2E}
\end{equation}
where $\mathcal{J}$ maps the layer-$\ell$ residual to the pre-softmax logits through the remaining $L - \ell$ quantized layers, the final RMSNorm, and the language-model head, and $p^\star$ is the teacher's softmax distribution at the position. The layer-wise reconstruction loss (squared $L_2$ or its locally quadratic variants such as SmoothL1) yields, exactly,
\begin{equation}
  M_{\mathrm{lw}} \;=\; I.
  \label{eq:M_lw}
\end{equation}
\end{proposition}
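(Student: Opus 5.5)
The plan is to compute, for each loss, the Hessian of $\mathcal{L}$ with respect to $\varepsilon_\ell$ at $\varepsilon_\ell=0$, and then read off $M$ by matching the chain-rule gradient $\nabla_{\theta}\mathcal{L} = J_\theta^{\top}\,\nabla_{\varepsilon_\ell}\mathcal{L}$ against the form in Eqn.~\eqref{eq:grad_form}. In both cases the key observation is that the loss is minimized at $\varepsilon_\ell = 0$, where the student output equals the teacher output. Hence the gradient vanishes there, and a first-order expansion of $\nabla_{\varepsilon_\ell}\mathcal{L}$ gives $\nabla_{\varepsilon_\ell}\mathcal{L} = \nabla^2_{\varepsilon_\ell}\mathcal{L}\big|_{0}\,\varepsilon_\ell + O(\|\varepsilon_\ell\|^2)$. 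So $M$ is exactly this Hessian, or its Gauss--Newton surrogate in the E2E case.

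\textbf{Layer-wise case.} Take $\mathcal{L}_{\mathrm{lw}} = \tfrac12\|\varepsilon_\ell\|^2$, absorbing the normalization constant. Then $\nabla_{\varepsilon_\ell}\mathcal{L}_{\mathrm{lw}} = \varepsilon_\ell$ holds exactly, with no remainder, which gives Eqn.~\eqref{eq:M_lw}. For SmoothL1 with threshold $\beta$, the loss is $\tfrac{1}{2\beta}\varepsilon^2$ coordinatewise on $|\varepsilon|<\beta$. Near zero this yields $M = \beta^{-1} I$, i.e.\ $I$ up to a scalar that can be folded into the learning rate. I will state this rescaling explicitly so that ``exactly'' refers to the quadratic regime.

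\textbf{E2E case.} Write the pre-softmax logits as $z = g(F_\ell(h^\star_\ell) + \varepsilon_\ell)$, where $g$ composes the remaining $L-\ell$ quantized layers, the final RMSNorm and the LM head. Let $\mathcal{J} = \partial z/\partial\varepsilon_\ell\big|_{0}$. The loss is $\mathrm{KL}(p^\star\,\|\,\mathrm{softmax}(z))$, so its gradient in $z$ is $\mathrm{softmax}(z) - p^\star$ and its Hessian in $z$ is $\mathrm{diag}(p) - pp^{\top}$. By the chain rule, the Hessian in $\varepsilon_\ell$ is
\[
\mathcal{J}^{\top}\bigl(\mathrm{diag}(p)-pp^{\top}\bigr)\mathcal{J} + \sum_k (p_k - p^\star_k)\,\nabla^2_{\varepsilon_\ell} z_k .
\]
The local Gauss--Newton approximation drops the second term. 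At the linearization point, where $p = p^\star$ in the regime where the downstream student reproduces the teacher logits, that term vanishes identically, and $p$ becomes $p^\star$ in the first term. This yields Eqn.~\eqref{eq:M_E2E}. Positive semidefiniteness follows because $\mathrm{diag}(p^\star)-p^\star p^{\star\top}$ is the covariance of a one-hot categorical variable and hence PSD, and congruence by $\mathcal{J}$ preserves PSD.

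\textbf{Main obstacle.} The delicate point is the E2E linearization point. Downstream layers are quantized, so $z(0)$ need not equal the teacher logits, and $\nabla_z\mathcal{L}$ at $\varepsilon_\ell = 0$ need not be zero. This is exactly what ``local Gauss--Newton approximation around $\varepsilon_\ell=0$'' is meant to absorb. I would handle it first by stating the assumption that the residual gradient term is neglected, i.e.\ downstream logits match $p^\star$ to leading order, consistent with the Gauss--Newton convention. I would then note that any constant offset in $\nabla_z\mathcal{L}$ contributes a zeroth-order term. Within the paper's framework that term is treated as part of the $O(\|\varepsilon_\ell\|^2)$ and approximation error rather than as part of $M$. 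Aggregating over positions and batch just sums these per-position metrics, giving the block form implicitly used in the statement.
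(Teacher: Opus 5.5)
Your proposal is correct and follows the paper's proof essentially step for step: the exact Hessian $I$ of $\tfrac12\|\varepsilon_\ell\|^2$, the chain-rule Hessian $\mathcal{J}^{\top}(\mathrm{diag}(p)-pp^{\top})\mathcal{J} + \sum_k (p_k-p^\star_k)\nabla^2_{\varepsilon_\ell} z_k$, and the Gauss--Newton step that drops the second term by assuming the downstream student reproduces the teacher's logits. Your refinements only make explicit what the paper leaves implicit: using the student's $p$ before setting $p=p^\star$, the $\beta^{-1}$ scale for SmoothL1, and flagging the offset when $z(0)$ differs from the teacher logits. That offset is better stated as an assumption than folded into $O(\|\varepsilon_\ell\|^2)$.
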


The matrix $H_{\mathrm{KL}}$ is the Fisher information of the teacher's categorical distribution, with spectral norm bounded by $\sigma_{\max}(H_{\mathrm{KL}}) \leq 2 \max_i p^\star_i (1-p^\star_i) \leq 1/2$ (vanishing as $\max_i p^\star_i \to 1$; see Appendix~\ref{app:fisher_bound}); $M_{\mathrm{E2E}}$ is therefore the softmax Fisher metric carried through $\mathcal{J}$ to the residual.

Combining Proposition~\ref{prop:metrics} with this spectral bound gives $\|M_{\mathrm{E2E}}\, v\| \leq 2\max_i p^\star_i(1-p^\star_i) \cdot \sigma_{\max}(\mathcal{J})^{2} \cdot \|v\|$ for every residual direction $v$, whereas $\|M_{\mathrm{lw}}\, v\| = \|v\|$ (Proposition~\ref{prop:dilution}, Appendix~\ref{app:fisher_bound}). Beyond the operator-norm bound, $M_{\mathrm{E2E}}$ is strongly anisotropic in practice on AIME25 traces: residual perturbations along its leading eigendirection are substantially more responsive than along random directions (Appendix~\ref{app:e2e_anisotropy}). This anisotropy cannot be removed by any scalar learning-rate schedule or global loss reweighting since it is a structural property of the E2E gradient.

The KV-cache-reducing residual directions $g_K \triangleq \nabla_{h_\ell} \|\Delta K_{\ell+1}\|^{2}$ (and analogously $g_V$ for $V$), decisive for long-decoding reasoning (Section~\ref{sec:motivation1}), sit in the small-eigenvalue subspace of $M_{\mathrm{E2E}}$ rather than the leading subspace E2E supervision actually trains, with relative responsiveness comparable to a random-Gaussian baseline (Appendix~\ref{app:e2e_anisotropy}). LW supervision, with $M_{\mathrm{lw}} = I$, treats every residual direction equally and so retains the K/V learning signal. LAQuant therefore adopts pure layer-wise supervision (Appendix~\ref{app:peaky_softmax} confirms via a controlled E2E baseline); the lookahead loss in Section~\ref{sec:lookahead} contributes a complementary mechanism, cross-layer co-adaptation, analyzed next.

\subsection{Loss Shape: One-Layer Lookahead}
\label{sec:lookahead}

We refine the layer-wise objective with a one-layer lookahead (LA) loss
\begin{equation}
  \mathcal{L}_{\mathrm{la}}(\theta_q^{(\ell)}) \;\triangleq\; \mathrm{SmoothL1}\!\Bigl( F_{\ell+1}\!\bigl(Q_\ell(\hat h_\ell;\, \theta_q^{(\ell)})\bigr) ,\; F_{\ell+1}\!\bigl(F_\ell(h^\star_\ell)\bigr) \Bigr),
  \label{eq:lookahead_loss}
\end{equation}
which propagates the residual through the next teacher layer before measuring error. At the final layer $\ell = L$, $F_{\ell+1}$ does not exist; we set $\mathcal{L}_{\mathrm{la}}$ to the layer-wise reconstruction loss for that layer.

\begin{proposition}[Lookahead loss-induced metric (local first-order); proof in Appendix~\ref{app:fisher_bound}]
\label{prop:lookahead_metric}
Under a first-order expansion of $F_{\ell+1}$ around $F_\ell(h^\star_\ell)$, the lookahead loss~\eqref{eq:lookahead_loss} induces the local Gauss--Newton metric
\begin{equation}
  M_{\mathrm{la}} \;=\; J_{\ell+1}^{\top}\, J_{\ell+1},
  \qquad
  J_{\ell+1} \triangleq \partial F_{\ell+1} / \partial h \,\big|_{h = F_\ell(h^\star_\ell)}.
  \label{eq:M_LA}
\end{equation}
Unlike $M_{\mathrm{E2E}}$ from Proposition~\ref{prop:metrics}, $M_{\mathrm{la}}$ contains no vocabulary-softmax factor and no composition of multiple downstream Jacobians.
\end{proposition}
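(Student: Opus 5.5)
The plan is to reduce the lookahead loss to the generic form of Eqn.~\eqref{eq:grad_form} with $\varepsilon_\ell$ defined as in Definition~\ref{def:residual}, and then read off the metric. Write the student's layer-$\ell$ output as $Q_\ell(\hat h_\ell;\theta_q^{(\ell)}) = F_\ell(h^\star_\ell) + \varepsilon_\ell$. Since $F_{\ell+1}$ is the frozen teacher layer, a first-order Taylor expansion around $h_0 \triangleq F_\ell(h^\star_\ell)$ gives
\begin{equation*}
  F_{\ell+1}\bigl(h_0 + \varepsilon_\ell\bigr) - F_{\ell+1}(h_0) \;=\; J_{\ell+1}\,\varepsilon_\ell + O\bigl(\|\varepsilon_\ell\|^{2}\bigr),
\end{equation*}
with $J_{\ell+1}$ as in \eqref{eq:M_LA}. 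So the lookahead loss \eqref{eq:lookahead_loss} is the SmoothL1 loss evaluated on the propagated residual $r \triangleq J_{\ell+1}\varepsilon_\ell + O(\|\varepsilon_\ell\|^2)$.

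The second step handles the loss shape. Near $r=0$, SmoothL1 is exactly quadratic: it equals $\tfrac{1}{2\beta}\|r\|^2$ componentwise inside the threshold $\beta$. This is the same locally quadratic treatment used for $M_{\mathrm{lw}} = I$ in Proposition~\ref{prop:metrics}, and I will absorb the scalar $1/\beta$ into the normalization there. The loss is therefore $\tfrac12 \varepsilon_\ell^\top J_{\ell+1}^\top J_{\ell+1}\varepsilon_\ell + O(\|\varepsilon_\ell\|^3)$. Differentiating with respect to $\theta_q^{(\ell)}$ by the chain rule, with $J_\theta = \partial\varepsilon_\ell/\partial\theta_q^{(\ell)}$, gives
\begin{equation*}
  \nabla_{\theta_q^{(\ell)}}\mathcal{L}_{\mathrm{la}} = J_\theta^\top J_{\ell+1}^\top J_{\ell+1}\,\varepsilon_\ell + O(\|\varepsilon_\ell\|^2).
\end{equation*}
This is the Gauss--Newton form: dropping the curvature of $F_{\ell+1}$ removes the second-order term, which is precisely what "local Gauss--Newton metric" means. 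Matching to \eqref{eq:grad_form} identifies $M_{\mathrm{la}} = J_{\ell+1}^\top J_{\ell+1}$, which is automatically $\succeq 0$.

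The main obstacle is making the error term honest. The gradient of the second-order remainder of $F_{\ell+1}$ contributes $J_\theta^\top\,\nabla^2 F_{\ell+1}[\varepsilon_\ell,\cdot]\,J_{\ell+1}\varepsilon_\ell$, which is $O(\|\varepsilon_\ell\|^2)$ only if $F_{\ell+1}$ is $C^2$ near $h_0$. RMSNorm, SiLU, and softmax attention are all smooth, but attention couples positions, so $J_{\ell+1}$ must be read as the full Jacobian over the $B\cdot T\cdot d_{\mathrm{hidden}}$ residual, including causal cross-token blocks. I would state this explicitly. I would also note that SmoothL1's switch to its linear regime lies outside the local neighborhood, so it does not affect the statement.

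The closing remark then follows by structural comparison with \eqref{eq:M_E2E}. $M_{\mathrm{la}}$ involves a single Jacobian $J_{\ell+1}$ of one teacher block, with no final RMSNorm, no LM head, and no factor $H_{\mathrm{KL}}$. By contrast, $M_{\mathrm{E2E}}$ composes $\mathcal{J}$ over $L-\ell$ layers and sandwiches the softmax Fisher matrix.
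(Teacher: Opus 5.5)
Your proposal is correct and follows essentially the same route as the paper's proof: Taylor-expand $F_{\ell+1}$ around $F_\ell(h^\star_\ell)$, use that SmoothL1 is quadratic below its threshold to get the local form $\tfrac12\varepsilon_\ell^\top J_{\ell+1}^\top J_{\ell+1}\varepsilon_\ell + O(\|\varepsilon_\ell\|^3)$, and read off $M_{\mathrm{la}} = J_{\ell+1}^\top J_{\ell+1}$. The extra steps you add (matching the gradient to Eqn.~\eqref{eq:grad_form}, tracking the $1/\beta$ scale, requiring $C^2$ smoothness for the remainder, and reading $J_{\ell+1}$ as the full cross-token Jacobian) make explicit what the paper leaves implicit; they do not change the argument.
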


$M_{\mathrm{la}}$ inherits the \emph{Jacobian-weighted attribution} property of $M_{\mathrm{E2E}} = \mathcal{J}^{\top} H_{\mathrm{KL}} \mathcal{J}$: both reweight residual directions by the sensitivity of downstream computation to the residual. Unlike $M_{\mathrm{E2E}}$, however, $M_{\mathrm{la}}$ contains neither the softmax Fisher factor $H_{\mathrm{KL}}$ nor the composition of $L - \ell$ downstream Jacobians, both of which are responsible for E2E gradient dilution (Proposition~\ref{prop:dilution}). Lookahead supervision therefore retains the structural advantage of E2E (knowing where the residual will land in the next-layer computation) while avoiding the spectral attenuation that dilutes E2E gradients along the very residual directions that long-decoding reasoning needs.

Because $M_{\mathrm{la}}$ is shaped by layer $\ell{+}1$'s Jacobian rather than by layer $\ell$'s alone, the optimization of layer $\ell$ is informed by how layer $\ell{+}1$ will use the residual: the optimizer favors configurations whose projection errors \emph{combine} into a small next-layer residual rather than minimizing any single projection's marginal error. We call this property \emph{cross-layer co-adaptation}. The coupled projection set includes K and V, so lookahead reduces K- and V-projection output RMSE alongside the other five projections (Table~\ref{tab:ablation_per_projection_rmse}), directly improving the KV-cache fidelity that Section~\ref{sec:motivation1} identifies as critical for long-decoding accuracy. Table~\ref{tab:method_ablation_lookahead} further reports performance as the lookahead depth varies.

\section{Evaluation}
\label{sec:evaluation}

\paragraph{Models and Benchmarks.} \label{sec:model_benchmark}
We apply LAQuant on Llama-3.1 8B~\cite{llama3}, DeepSeek-R1-distilled Llama-3.1 8B~\cite{r1distillllama}, and Qwen3 (Base + chat variants at 1.7B, 4B, 8B)~\cite{qwen3}. Following ParoQuant~\cite{paroquant}, we evaluate LAQuant on three different types of benchmarks: i) perplexity on WikiText2 and C4, ii) zero-shot accuracy on ARC-Challenge, ARC-Easy~\cite{arc}, BoolQ~\cite{boolq}, and HellaSwag~\cite{hellaswag}, and iii) reasoning accuracy on AIME2024~\cite{aime2024}, AIME2025~\cite{aime2025}, GPQA Diamond~\cite{gpqa}, LSAT-AR~\cite{lsat}, MMLU-Pro~\cite{mmlupro}, and LiveCodeBench~\cite{lcb} (2024.10-2025.02 subset as used in the official Qwen3 evaluation\footnote{\url{https://qwen.ai/blog?id=qwen3}}).

\begin{table}[!ht]
\centering
\caption{Pass@1$^{\uparrow}$ on \textbf{reasoning tasks} under 3- and 4-bit quantization with a group size of 128. R-QAT and ParoQ denote ReasoningQAT and ParoQuant, respectively; ParoQ++ uses the same reasoning-domain dataset and training data size as LAQuant for a controlled experiment, and R-QAT uses its own data selection heuristic. Detailed results are reported in Appendix~\ref{app:full_reason_tasks}.}
\label{tab:res_main_reasoning}

\begin{subtable}[!ht]{\textwidth}
\centering
\caption{W3G128 Pass@1 on reasoning tasks.}
\label{tab:res_w3_reason}
\scriptsize
\setlength{\tabcolsep}{4pt}
\renewcommand{\arraystretch}{0.5}
\resizebox{\textwidth}{!}{%
\begin{tabular}{llcccccccc}
\toprule[1.2pt]
\textbf{Model} & \textbf{Method} & \textbf{AIME24} & \textbf{AIME25} & \textbf{GPQA} & \textbf{LSAT} & \textbf{MMLU-Pro} & \textbf{LCB} & \textbf{Avg.} & \textbf{Spd.} \\
\midrule
\multirow{6}{*}[-1ex]{Qwen3-1.7B}
& FP16   & 45.84 & 35.94 & 40.70 & 64.10 & 57.31 & 33.71 & 46.27 & 1.0$\times$ \\
\cmidrule(lr){2-10}
& GPTQ   & 0.00 & 0.00 & 25.03 & 18.78 & 22.34 & 0.00 & 11.03 & 2.29$\times$ \\
& R-QAT  & 25.52 & 23.70 & 31.03 & 47.69 & 47.48 & 19.90 & 32.55 & 2.29$\times$ \\
& ParoQ  & 7.24 & 16.88 & 30.49 & 36.10 & 43.79 & 12.94 & 24.57 & 2.05$\times$ \\
& ParoQ++  & 33.65 & 28.23 & 30.81 & 49.30 & 45.84 & 19.40 & 34.54 & 2.05$\times$\\
& LAQuant   & \textbf{34.64} & \textbf{29.33} & \textbf{31.79} & \textbf{54.86} & \textbf{47.94} & \textbf{21.02} & \textbf{36.60} & 2.29$\times$ \\
\midrule
\multirow{6}{*}[-1ex]{Qwen3-4B}
& FP16   & 73.23 & 63.75 & 55.59 & 81.14 & 70.23 & 54.48 & 66.40 & 1.0$\times$ \\
\cmidrule(lr){2-10}
& GPTQ   & 12.92 & 12.76 & 33.65 & 40.41 & 53.32 & 2.99 & 26.01 & 3.42$\times$ \\
& R-QAT  & 43.75 & 35.31 & \textbf{46.21} & 67.64 & \textbf{63.24} & 37.31 & 48.91 & 3.42$\times$ \\
& ParoQ  & 41.30 & 39.58 & 44.41 & 68.94 & 63.41 & 38.31 & 49.33 & 3.01$\times$ \\
& ParoQ++  & 59.74 & 52.76 & 46.17 & 77.07 & 62.41 & 45.03 & 57.20 & 3.01$\times$ \\
& LAQuant   & \textbf{62.92} & \textbf{54.69} & \textbf{46.21} & \textbf{77.15} & 62.63 & \textbf{45.15} & \textbf{58.12} & 3.42$\times$ \\
\midrule
\multirow{6}{*}[-1ex]{Qwen3-8B}
& FP16   & 74.95 & 67.66 & 58.78 & 85.57 & 73.44 & 58.46 & 69.81 & 1.0$\times$ \\
\cmidrule(lr){2-10}
& GPTQ   & 44.48 & 40.68 & 48.17 & 69.92 & 65.71 & 37.94 & 51.15 & 5.15$\times$ \\
& R-QAT  & 53.75 & 43.91 & 48.61 & 77.23 & 68.33 & 44.16 & 56.00 & 5.15$\times$ \\
& ParoQ  & 59.53 & 52.92 & 52.30 & 78.53 & \textbf{68.93} & 44.65 & 59.48 & 4.63$\times$ \\
& ParoQ++  & 67.19 & 58.39 & 53.09 & 79.48 & 63.32 & 51.12 & 62.10 & 4.63$\times$ \\
& LAQuant   & \textbf{68.33} & \textbf{61.30} & \textbf{53.66} & \textbf{83.80} & 66.86 & \textbf{51.74} & \textbf{64.28} & 5.15$\times$ \\
\midrule
\multirow{6}{*}[-1ex]{R1-Distill-Llama-8B}
& FP16   & 43.44 & 31.20 & 48.17 & 59.78 & 58.84 & 36.32 & 46.29 & 1.0$\times$ \\
\cmidrule(lr){2-10}
& GPTQ   & 15.45 & 19.64 & 37.97 & 42.47 & 47.10 & 24.13 & 31.13 & 5.56$\times$ \\
& R-QAT  & 30.16 & 27.50 & 40.50 & 48.48 & 51.05 & 29.11 & 37.80 & 5.56$\times$ \\
& ParoQ  & 0.89 & 0.05 & 32.99 & 29.65 & 44.29 & 1.12 & 18.17 & 4.90$\times$ \\
& ParoQ++  & 30.63 & 25.57 & 40.37 & 48.89 & 51.44 & 29.47 & 37.73 & 4.90$\times$ \\
& LAQuant   & \textbf{31.46} & \textbf{28.54} & \textbf{43.31} & \textbf{50.98} & \textbf{51.70} & \textbf{31.22} & \textbf{39.54} & 5.56$\times$ \\
\bottomrule[1.2pt]
\end{tabular}%
}
\end{subtable}

\vspace{0.5em}

\begin{subtable}[!ht]{\textwidth}
\centering
\caption{W4G128 Pass@1 on reasoning tasks.}
\label{tab:res_w4_reason}
\setlength{\tabcolsep}{4pt}
\renewcommand{\arraystretch}{0.5}
\resizebox{\textwidth}{!}{%
\begin{tabular}{llcccccccc}
\toprule[1.2pt]
\textbf{Model} & \textbf{Method} & \textbf{AIME24} & \textbf{AIME25} & \textbf{GPQA} & \textbf{LSAT} & \textbf{MMLU-Pro} & \textbf{LCB} & \textbf{Avg.} & \textbf{Spd.} \\
\midrule
\multirow{6}{*}[-1ex]{Qwen3-1.7B}
& FP16   & 45.84 & 35.94 & 40.70 & 64.10 & 57.31 & 33.71 & 46.27 & 1.0$\times$ \\
\cmidrule(lr){2-10}
& GPTQ   & 24.17 & 25.47 & 33.14 & 48.29 & 50.49 & 20.40 & 33.66 & 1.64$\times$ \\
& R-QAT  & 32.61 & 27.29 & 35.70 & 53.48 & 54.44 & 25.87 & 38.23 & 1.64$\times$ \\
& ParoQ  & 39.53 & 32.55 & 35.86 & 58.30 & 54.58 & 28.61 & 41.57 & 1.48$\times$ \\
& ParoQ++  & 41.82 & 32.71 & 36.30 & 60.52 & 55.44 & 28.98 & 42.63 & 1.48$\times$ \\
& LAQuant   & \textbf{42.35} & \textbf{35.47} & \textbf{36.43} & \textbf{61.14} & \textbf{55.58} & \textbf{31.84} & \textbf{43.80} & 1.64$\times$ \\
\midrule
\multirow{6}{*}[-1ex]{Qwen3-4B}
& FP16   & 73.23 & 63.75 & 55.59 & 81.14 & 70.23 & 54.48 & 66.40 & 1.0$\times$ \\
\cmidrule(lr){2-10}
& GPTQ   & 63.44 & 58.67 & 51.61 & 78.10 & 68.03 & 48.75 & 61.43 & 2.0$\times$ \\
& R-QAT  & 64.64 & 57.66 & 52.15 & 78.45 & 68.28 & 46.76 & 61.32 & 2.0$\times$ \\
& ParoQ  & 67.29 & 60.78 & 52.46 & 79.43 & \textbf{69.41} & 50.00 & 63.23 & 1.84$\times$ \\
& ParoQ++  & 70.37 & 61.72 & 53.44 & 80.87 & 68.73 & 51.62 & 64.46 & 1.84$\times$ \\
& LAQuant   & \textbf{70.42} & \textbf{62.66} & \textbf{53.60} & \textbf{81.03} & 68.41 & \textbf{52.74} & \textbf{64.81} & 2.0$\times$ \\
\midrule
\multirow{6}{*}[-1ex]{Qwen3-8B}
& FP16   & 74.95 & 67.66 & 58.78 & 85.57 & 73.44 & 58.46 & 69.81 & 1.0$\times$ \\
\cmidrule(lr){2-10}
& GPTQ   & 71.56 & 65.21 & 57.89 & 82.31 & \textbf{72.67} & 52.99 & 67.10 & 2.45$\times$ \\
& R-QAT  & 68.44 & 59.48 & \textbf{58.49} & \textbf{85.24} & 72.65 & 51.62 & 65.99 & 2.45$\times$ \\
& ParoQ  & 73.39 & 64.52 & 56.85 & 84.43 & 72.43 & 55.47 & 67.85 & 2.32$\times$ \\
& ParoQ++  & 73.81 & 64.27 & 56.66 & 83.99 & 72.59 & 55.72 & 67.84 & 2.32$\times$ \\
& LAQuant   & \textbf{74.17} & \textbf{66.98} & 57.23 & \textbf{85.24} & 71.52 & \textbf{56.69} & \textbf{68.62} & 2.45$\times$ \\
\midrule
\multirow{6}{*}[-1ex]{R1-Distill-Llama-8B}
& FP16   & 43.44 & 31.20 & 48.17 & 59.78 & 58.84 & 36.32 & 46.29 & 1.0$\times$ \\
\cmidrule(lr){2-10}
& GPTQ   & 38.44 & 29.85 & \textbf{46.65} & 56.66 & \textbf{56.78} & 34.21 & 43.66 & 2.46$\times$ \\
& R-QAT  & 40.63 & 29.53 & 46.09 & 57.01 & 55.96 & 34.33 & 43.92 & 2.46$\times$ \\
& ParoQ  & 39.12 & 28.54 & 46.34 & 54.81 & 56.69 & 33.33 & 43.14 & 2.33$\times$ \\
& ParoQ++  & 38.86 & 27.60 & 46.46 & 56.66 & 56.77 & 35.45 & 43.63 & 2.33$\times$ \\
& LAQuant   & \textbf{41.41} & \textbf{31.77} & 45.55 & \textbf{58.67} & 54.98 & \textbf{35.70} & \textbf{44.68} & 2.46$\times$ \\
\bottomrule[1.2pt]
\end{tabular}%
}
\end{subtable}

\end{table}

\paragraph{Implementation Details.}
We use group-128 weight-only quantization at 3 and 4 bits. For reasoning evaluation, QAT data is the first 4,096 samples of OpenR1-Math-220k~\cite{openr1} (verified disjoint from our benchmarks); we apply the Qwen-style chat template to prompts and DeepSeek-R1 responses and right-truncate to 2,048 tokens including formatting. For fair comparison, GPTQ is calibrated using the same data budget as the QAT-based methods in each setting. For base models on general tasks, we use a WikiText2/C4/RedPajama mixture.

Our framework builds on the official EfficientQAT~\cite{efficientqat} implementation, training each layer for 20 epochs with embedding and LM head frozen. Weights, quantization parameters (scales and zeros), learnable clipping bounds~\cite{omniquant}, and per-channel scaling~\cite{smoothquant} are jointly trained with AdamW~\cite{adamw}. Following ParoQuant~\cite{paroquant}, we use SmoothL1~\cite{fastrcnn,huber} as the reconstruction loss. The learning rate for weights is set to half that of the other trainable quantities ($2.5{\times}10^{-5}$/$5{\times}10^{-5}$ for Llama, $5{\times}10^{-5}$/$1{\times}10^{-4}$ for Qwen3).

\subsection{Accuracy Results}

\paragraph{Reasoning Tasks.}
\label{sec:reasoningresults}
Table~\ref{tab:res_main_reasoning} reports Pass@1 on six reasoning tasks under 4-bit and 3-bit quantization. The results first show that reasoning-domain calibration data are crucial for preserving reasoning ability in the low-bit regime. Under W3G128, ParoQuant++ consistently outperforms ParoQuant across the models, with the average accuracy on Qwen3-1.7B improving from 24.57 to 34.54, corresponding to a \textbf{40.6\%} relative gain. The effect is most pronounced on R1-Distill-Llama-8B at W3G128, where ParoQuant's default calibration collapses AIME25 Pass@1 to $0.05$ while ParoQuant++ recovers it to $25.57$ --- a $\sim$20\,pp six-task average gain ($18.17\to 37.73$) attributable to calibration choice alone.

The results also highlight the advantage of lookahead quantization as an efficient alternative to online transformations. Compared with ParoQuant++, our method further improves reasoning performance while eliminating inference overhead. On DeepSeek-R1-Distill-Llama-8B, LAQuant improves the average accuracy from 43.63 to 44.68, corresponding to a \textbf{2.4\%} relative gain, while also increasing decoding speedup from 2.33$\times$ to 2.46$\times$. This shows that lookahead quantization can provide additional reasoning performance without incurring the inference overhead of online transformations. Detailed results for each reasoning benchmark are provided in Appendix~\ref{app:full_reason_tasks}.
\begin{table}[!ht]
\centering
\caption{Results on \textbf{general tasks} under 4- and 3-bit quantization. Perplexity is evaluated at a context length of 8192. 0-shot denotes the average accuracy across ARC-C, ARC-E, BoolQ, and HellaSwag.}
\label{tab:res_main_general}
\scriptsize
\setlength{\tabcolsep}{2pt}
\renewcommand{\arraystretch}{0.7}

\begin{tabular}{llccc ccc ccc ccc}
\toprule[1.2pt]
\multirow{2}{*}{Method} & \multirow{2}{*}{Bits}
& \multicolumn{3}{c}{\textbf{Qwen3-1.7B-Base}}
& \multicolumn{3}{c}{\textbf{Qwen3-4B-Base}}
& \multicolumn{3}{c}{\textbf{Qwen3-8B-Base}}
& \multicolumn{3}{c}{\textbf{Llama3.1-8B}} \\
\cmidrule(lr){3-5} \cmidrule(lr){6-8} \cmidrule(lr){9-11} \cmidrule(lr){12-14}
&
& WikiText2$^{\downarrow}$ & C4$^{\downarrow}$ & 0-shot$^{\uparrow}$
& WikiText2$^{\downarrow}$ & C4$^{\downarrow}$ & 0-shot$^{\uparrow}$
& WikiText2$^{\downarrow}$ & C4$^{\downarrow}$ & 0-shot$^{\uparrow}$
& WikiText2$^{\downarrow}$ & C4$^{\downarrow}$ & 0-shot$^{\uparrow}$ \\
\midrule
FP16
& 16
& 8.31 & 8.71 & 59.95
& 7.01 & 7.63 & 66.28
& 6.23 & 6.97 & 68.95
& 5.61 & 6.76 & 68.68 \\
\cmidrule(lr){3-14}
GPTQ
& 3
& 10.56 & 10.88 & 51.72
& 8.12 & 8.71 & 60.11
& 6.96 & 7.65 & 67.34
& 6.79 & 7.90 & 63.08 \\
ParoQ
& 3
& \textbf{8.91} & \textbf{9.38} & 58.53
& 7.57 & 8.21 & \textbf{66.01}
& \textbf{6.56} & \textbf{7.32} & 68.61
& 6.38 & 7.49 & 65.51 \\
LAQuant
& 3
& 8.99 & 9.52 & \textbf{58.61}
& \textbf{7.44} & \textbf{8.16} & 65.39
& 6.57 & 7.37 & \textbf{69.35}
& \textbf{6.31} & \textbf{7.48} & \textbf{65.64} \\
\cmidrule(lr){3-14}
GPTQ
& 4
& 8.74 & 9.14 & 58.95
& 7.23 & 7.85 & 65.79
& 6.39 & 7.11 & \textbf{69.48}
& 5.85 & 6.98 & 68.25 \\
ParoQ
& 4
& \textbf{8.43} & \textbf{8.83} & \textbf{60.75}
& \textbf{7.09} & \textbf{7.73} & \textbf{67.36}
& \textbf{6.29} & \textbf{7.04} & 69.32
& \textbf{5.78} & \textbf{6.91} & 68.12 \\
LAQuant
& 4
& 8.46 & 8.91 & 60.20
& 7.12 & 7.77 & 66.44
& 6.32 & 7.07 & 69.32
& 5.80 & 6.94 & \textbf{68.36} \\
\bottomrule[1.2pt]
\end{tabular}
\end{table}
\vspace{-1.0em}
\paragraph{General Tasks.}
On general tasks (Table~\ref{tab:res_main_general}), LAQuant matches ParoQuant on perplexity and zero-shot accuracy across all evaluated models in both W4G128 and W3G128, while avoiding online transformations. Per-task results are provided in Appendix~\ref{app:full_general_tasks}.
\subsection{Efficiency Results}
\vspace{-2.0em}
\paragraph{Inference.}
\begin{table}[!ht]
\centering
\caption{Decoding (batch size = 1) throughput (tokens/sec)$^{\uparrow}$ on RTX A6000.}
\label{tab:rtx_a6000_sm86}
\scriptsize
\setlength{\tabcolsep}{3pt}
\renewcommand{\arraystretch}{0.75}
\begin{tabular}{lccccc}
\toprule[1.2pt]
RTX A6000 & Bits & Qwen3-1.7B & Qwen3-4B & Qwen3-8B & Llama3.1-8B \\
\midrule
FP16    & 16 & 138.4         & 67.6          & 38.2     & 40.47     \\
\midrule
ParoQuant & 4  & 205.3 (x1.48) & 124.5 (x1.84) & 88.7 (x2.32) &  94.35 (x2.33) \\
LAQuant      & 4  & \textbf{227.3} (x1.64) & \textbf{135.1} (x2.00) & \textbf{93.8} (x2.45) & \textbf{99.74} (x2.46) \\
\midrule
ParoQuant & 3  & 283.7 (x2.05) & 203.5 (x3.01) & 177.0 (x4.63) & 198.13 (x4.90) \\
LAQuant      & 3  & \textbf{316.9} (x2.29) & \textbf{231.3} (x3.42) & \textbf{196.8} (x5.15) & \textbf{225.01} (x5.56) \\
\bottomrule[1.2pt]
\end{tabular}
\end{table}
Table~\ref{tab:rtx_a6000_sm86} shows decoding throughput on an RTX A6000. For INT3, we adopt the GPTQ GEMV kernel with additional tuning for smaller model sizes, since the original kernel was designed for 175B-scale models; we describe the modifications in detail in Appendix~\ref{app:int3_kernel}. To ensure a fair comparison, we integrate our INT3 kernel into the ParoQuant benchmark pipeline (\texttt{StaticCache} + \texttt{torch.compile} + CUDA graphs) and run both with and without the online rotation. For INT4, both LAQuant and ParoQuant use the same AWQ GEMV kernel; the only difference is whether the rotation is applied.

Removing the online rotation consistently improves throughput: 6--11\% over ParoQuant at W4 (avg 7.7\%) and 11--14\% at W3 (avg 12.5\%). The W3 gain is larger because the GEMV is cheaper, so the fixed rotation overhead is a larger share of per-layer latency; sequential decoding then accumulates this over long generations. Results on more GPUs are provided in Appendix~\ref{app:full_decode_throughput}.

\paragraph{Training.}
\label{sec:training}
\begin{table}[!ht]
\centering
\caption{Comparison of resource usage across ParoQuant, ReasoningQAT, and LAQuant.}
\label{tab:res_cost}
\resizebox{\linewidth}{!}{%
\begin{tabular}{lccccccccccc}
\toprule[1.2pt]
& & & \multicolumn{3}{c}{ParoQuant} & \multicolumn{3}{c}{ReasoningQAT} & \multicolumn{3}{c}{LAQuant} \\
\cmidrule(lr){4-6} \cmidrule(lr){7-9} \cmidrule(lr){10-12}
Model & Batch & Shard. & Train tokens & VRAM (GB) & Time (h) & Train tokens & VRAM (GB) & Time (h) & Train tokens & VRAM (GB) & Time (h) \\
\midrule
Q3-1.7B & 32 &  & 167.8M & 113.9 & 3.29  & 285.2M & 79.79 & 30.96 & 167.8M & 102.2 & 1.82  \\
Q3-4B   & 16 &  & 167.8M & 138.7 & 8.76  & 285.2M & 98.03 & 66.41 & 167.8M & 126.5 & 4.63  \\
Q3-8B   & 16 & \checkmark & 167.8M & 121.3 & 19.82 & 285.2M & 126.3 & 78.44 & 167.8M & 119.2 & 14.91 \\
\bottomrule[1.2pt]
\end{tabular}%
}
\end{table}
Table~\ref{tab:res_cost} compares training efficiency across ParoQuant, ReasoningQAT, and LAQuant on a single H200 144GB GPU. For a fair comparison with ParoQuant at matched accuracy, both methods use the same data budget (4,096 samples, length 2,048, 20 epochs); ReasoningQAT already uses 1.7$\times$ more tokens under its original two-stage training pipeline, so we keep both stages in its prescribed configuration. Each method runs at its largest feasible batch size, with \emph{Shard.}\ denoting CPU offloading. LAQuant trains faster across all model sizes thanks to its lightweight optimization (no online-transform parameters, fewer tokens than ReasoningQAT); on Qwen3-8B, training drops from 19.82 to 14.91 GPU hours (24.8\% reduction) at slightly lower peak VRAM.

\subsection{Ablation Study}
We present two complementary ablations in Table~\ref{tab:method_ablation}: a task-level analysis of lookahead depth and a projection-level output RMSE analysis with and without lookahead. The experimental settings and additional ablations are provided in Appendix~\ref{app:additional_abl_study}.
\begin{table}[!ht]
\centering
\caption{Method ablations on Qwen3-1.7B (W3G128); (b) also evaluated on Qwen3-4B.}
\label{tab:method_ablation}

\begin{subtable}[!ht]{0.32\textwidth}
\centering
\caption{Lookahead depth $k$: Pass@1 on AIME25 / LSAT, with their average.}
\label{tab:method_ablation_lookahead}
\scriptsize
\setlength{\tabcolsep}{3pt}
\renewcommand{\arraystretch}{0.95}
\begin{tabular}{lcccc}
\toprule
 & $k{=}0$ & $k{=}0{\to}1$ & $k{=}1$ & $k{=}2$ \\
\midrule
AIME25 & 26.88 & 28.39 & 29.33 & 28.44 \\
LSAT   & 52.15 & 52.36 & 54.86 & 53.94 \\
Avg.   & 39.52 & 40.38 & \textbf{42.10} & 41.19 \\
\bottomrule
\end{tabular}
\end{subtable}
\hfill
\begin{subtable}[!ht]{0.63\textwidth}
\centering
\caption{Per-projection output RMSE on AIME25 traces, with ($k{=}1$) vs without ($k{=}0$) lookahead.}
\label{tab:ablation_per_projection_rmse}
\scriptsize
\renewcommand{\arraystretch}{0.7}
\setlength{\tabcolsep}{1.5pt}
\begin{tabular}{llrrrrrrr}
\toprule[1.2pt]
\textbf{Model}
& \textbf{$k$}
& \textbf{Q} & \textbf{K} & \textbf{V} & \textbf{O}
& \textbf{Gate} & \textbf{Up} & \textbf{Down} \\
\midrule
\multirow{3}{*}{1.7B}
& $0$    & 1.8729 & 0.4935 & 1.5039 & 0.8288 & 0.3675 & 0.3678 & 1.2166 \\
& $1$    & 1.8442 & 0.4892 & 1.4812 & 0.8089 & 0.3590 & 0.3615 & 1.2025 \\
& $\Delta$ & $-1.53\%$ & $-0.88\%$ & $-1.51\%$ & $-2.40\%$ & $-2.31\%$ & $-1.70\%$ & $-1.16\%$ \\
\midrule
\multirow{3}{*}{4B}
& $0$    & 0.2821 & 0.3188 & 0.2545 & 0.1344 & 0.1919 & 0.1756 & 0.2502 \\
& $1$    & 0.2783 & 0.3162 & 0.2490 & 0.1304 & 0.1893 & 0.1724 & 0.2411 \\
& $\Delta$ & $-1.34\%$ & $-0.81\%$ & $-1.79\%$ & $-2.95\%$ & $-1.41\%$ & $-1.81\%$ & $-3.64\%$ \\
\bottomrule[1.2pt]
\end{tabular}
\end{subtable}
\end{table}
\vspace{-1.0em}

\paragraph{Lookahead Depth.}
Table~\ref{tab:method_ablation_lookahead} reports lookahead depth $k$ (Section~\ref{sec:lookahead}), with $k{=}0$ the pure layer-wise objective, $k{\geq}1$ a $k$-layer lookahead, and $k{=}0{\to}1$ a staged schedule. Lifting $k$ from 0 to 1 improves the AIME25 / LSAT average from 39.52 to 42.10; the staged schedule reaches only 40.38, and $k{=}2$ adds training cost without accuracy gain. We default to $k=1$.

\paragraph{Per-Projection Output RMSE.}
With the lookahead loss, all seven projections improve by $0.8$--$3.6\%$ (Table~\ref{tab:ablation_per_projection_rmse}). The largest reductions land on the attention and MLP output projections (o\_proj $-2.95\%$, down\_proj $-3.64\%$ at Qwen3-4B), which contribute directly to the next-layer input and so receive the strongest reweighting under cross-layer co-adaptation (Section~\ref{sec:lookahead}).

\section{Conclusion}

We presented LAQuant, a single-stage layer-wise weight-only QAT method for large reasoning models. Our gradient-direction analysis identifies two factors driving the LRM-quantization difficulty gap: end-to-end supervision attenuates per-layer learning signals (including for KV-cache directions) via the softmax Fisher metric, and calibration-data choice is governed by Hessian-subspace alignment with the deployment distribution. LAQuant addresses both with a one-layer lookahead loss whose implicit cross-layer co-adaptation preserves the next-layer residual stream, paired with a reasoning-domain calibration corpus, producing standard integer weights deployable on any existing weight-only kernel. On Qwen3 and DeepSeek-R1-distilled Llama models, LAQuant attains the highest average reasoning accuracy among weight-only methods at W3G128 and W4G128 while delivering a 3.42$\times$ decoding speedup over FP16 on RTX A6000.

\newpage

\bibliographystyle{plain}
\bibliography{ref}

\newpage


\appendix

\section{Related Work}
\label{app:related_work}

\paragraph{Post-training quantization (PTQ) for LLMs.}
GPTQ~\cite{gptq} introduced layer-wise OBS-style weight reconstruction with per-group quantization, becoming the dominant PTQ recipe for $\geq 4$-bit weight-only quantization. AWQ~\cite{awq} adds per-channel scaling absorbed into the preceding LayerNorm to mitigate activation outliers. SqueezeLLM~\cite{squeezellm} introduces sensitivity-based non-uniform quantization paired with a dense-and-sparse decomposition for outlier handling. SmoothQuant~\cite{smoothquant} and OmniQuant~\cite{omniquant} extend the channel-scaling and clipping ideas with learnable parameters. Rotation-based methods including QuaRot~\cite{quarot}, SpinQuant~\cite{spinquant}, and DuQuant~\cite{duquant} insert orthogonal transforms (typically Hadamard) to mix outlier directions into the bulk before quantization, achieving 4-bit all-quantization but introducing online rotation overhead at inference. Variable-bit-per-layer schemes~\cite{varlayerquant} assign different bit widths across layers based on layer-importance signals, complementing fixed-bit recipes. Lower-bit weight-only methods reach 2-bit~\cite{efficientqat,bitdistiller,dbllm,vptq} or even 1-bit~\cite{billm,bitnet} regimes.

\paragraph{Quantization-aware training (QAT) for LLMs.}
EfficientQAT~\cite{efficientqat} introduced block-wise QAT with learnable quantization parameters followed by an optional end-to-end (E2E) phase, demonstrating that scales and zeros can be trained jointly with weights at scale. BitDistiller~\cite{bitdistiller} and RILQ~\cite{rilq} apply teacher--student distillation in an E2E fashion. SpinQuant~\cite{spinquant} performs an additional E2E phase over rotation parameters.

\paragraph{Quantization for large reasoning models.}
ReasoningQAT~\cite{reasoningqat} chains layer-wise and E2E QAT with a domain-mixed calibration corpus, the first quantization pipeline targeting LRMs specifically. ParoQuant~\cite{paroquant} combines minimal pairwise rotation with channel scaling to reduce per-layer error. QuantLRM~\cite{quantlrm} leverages weight-update statistics from reasoning-incentivized fine-tuning to identify per-channel importance for LRM quantization. Concurrent observations~\cite{quantmeetsreasoning,quanthurtsreasoning,qwen3quant,quantlongcontext} report that legacy quantization recipes degrade markedly on long-decoding reasoning benchmarks; recent analyses~\cite{quantrlrm} further study the interaction between quantization and reasoning-incentivized RL training.

\paragraph{Calibration data selection.}
Pre-training corpora (WikiText2~\cite{wikitext2}, C4~\cite{c4}, RedPajama~\cite{redpajama}) have been the default calibration source. Recent work~\cite{impactofcalib} reports that domain-aligned calibration improves task accuracy, with effects largest at low bit widths and on instruction-tuned models. We extend this empirical observation with a Hessian-subspace-alignment formalization in Section~\ref{sec:calibdata}.

\section{Proofs of Propositions 1--3 and Spectral Bound}
\label{app:fisher_bound}

We restate and prove the spectral-norm bound on $H_{\mathrm{KL}}$ used in Section~\ref{sec:supervision}.

\begin{lemma}[Spectral-norm bound on the categorical Fisher]
\label{lem:fisher_bound}
For any $p^\star \in \Delta^{V-1}$,
\begin{equation}
  \sigma_{\max}\bigl(\mathrm{diag}(p^\star) - p^\star (p^\star)^{\top}\bigr) \;\leq\; 2 \max_i p^\star_i (1 - p^\star_i) \;\leq\; \tfrac{1}{2},
  \label{eq:fisher_bound}
\end{equation}
with the upper bound vanishing as $\max_i p^\star_i \to 1$.
\end{lemma}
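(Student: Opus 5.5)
The plan is to bound the spectral norm by the maximum absolute row sum, i.e.\ the induced $\infty$-norm, after first observing that $H_{\mathrm{KL}} \triangleq \mathrm{diag}(p^\star) - p^\star (p^\star)^{\top}$ is symmetric positive semidefinite. Positive semidefiniteness follows from the identity
\begin{equation*}
v^{\top} H_{\mathrm{KL}} v \;=\; \sum_i p^\star_i v_i^2 - \Bigl(\sum_i p^\star_i v_i\Bigr)^{2} \;\geq\; 0,
\end{equation*}
where the inequality is Jensen's (or Cauchy--Schwarz with weights $p^\star_i$, using $\sum_i p^\star_i = 1$). Hence $\sigma_{\max}(H_{\mathrm{KL}}) = \lambda_{\max}(H_{\mathrm{KL}})$. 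For any square matrix, the spectral radius is at most any induced norm, in particular $\|\cdot\|_\infty$; equivalently one can use Gershgorin's theorem directly on the real eigenvalues.

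The key computation is the $i$-th absolute row sum. The diagonal entry is $p^\star_i - (p^\star_i)^2 = p^\star_i(1-p^\star_i) \geq 0$, and the off-diagonal entries are $-p^\star_i p^\star_j$. The row sum is therefore
\begin{equation*}
p^\star_i(1-p^\star_i) + p^\star_i \sum_{j\neq i} p^\star_j \;=\; 2\,p^\star_i(1-p^\star_i),
\end{equation*}
again using $\sum_j p^\star_j = 1$. Taking the maximum over $i$ gives the first inequality in \eqref{eq:fisher_bound}. The second inequality is the elementary bound $t(1-t)\leq 1/4$ for $t\in[0,1]$.

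For the vanishing claim, the one subtle point is that the maximizer of $p^\star_i(1-p^\star_i)$ need not be the argmax coordinate, so I handle both kinds of coordinates. Let $m = \max_i p^\star_i$. For the argmax coordinate, $p^\star_i(1-p^\star_i) \leq 1-m$. For every other coordinate $j$, $p^\star_j \leq 1-m$, so $p^\star_j(1-p^\star_j) \leq p^\star_j \leq 1-m$. Hence $2\max_i p^\star_i(1-p^\star_i) \leq 2(1-m) \to 0$ as $m \to 1$. I expect no real obstacle here: the only step needing care is this last one, together with justifying $\sigma_{\max} = \lambda_{\max}$ via positive semidefiniteness, which the identity above settles.
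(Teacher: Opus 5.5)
Your proof is correct and follows essentially the same route as the paper's: use positive semidefiniteness to identify $\sigma_{\max}$ with $\lambda_{\max}$, bound it by the Gershgorin row sum $2\max_i p^\star_i(1-p^\star_i)$, then apply $t(1-t)\le 1/4$. You are slightly more careful than the paper in two places: you prove $H_{\mathrm{KL}} \succeq 0$ via the variance identity, whereas the paper only asserts it, and you give the explicit bound $2(1-\max_i p^\star_i)$ for the vanishing claim, whereas the paper argues the limit only qualitatively.
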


\begin{proof}
Let $A = \mathrm{diag}(p^\star) - p^\star (p^\star)^{\top}$. The diagonal entries are $A_{ii} = p^\star_i(1-p^\star_i)$ and the off-diagonals are $A_{ij} = -p^\star_i p^\star_j$. Since $A \succeq 0$, $\sigma_{\max}(A)$ equals its largest eigenvalue. By Gershgorin's circle theorem,
\[
\sigma_{\max}(A) \;\leq\; \max_i \Bigl( A_{ii} + \sum_{j \neq i} |A_{ij}| \Bigr) \;=\; \max_i \Bigl( p^\star_i(1-p^\star_i) + p^\star_i \!\!\sum_{j \neq i} p^\star_j \Bigr) \;=\; 2 \max_i p^\star_i (1-p^\star_i),
\]
using $\sum_{j \neq i} p^\star_j = 1 - p^\star_i$. The map $t \mapsto t(1-t)$ on $[0,1]$ attains its maximum $1/4$ at $t = 1/2$, giving $2 \max_i p^\star_i(1-p^\star_i) \leq 1/2$. As $\max_i p^\star_i \to 1$, all probabilities approach $0$ or $1$, so $\max_i p^\star_i(1-p^\star_i) \to 0$ and the bound vanishes.
\end{proof}

\begin{proposition}[E2E gradient attenuation]
\label{prop:dilution}
Combining Proposition~\ref{prop:metrics} with Lemma~\ref{lem:fisher_bound}, every direction $v$ in the per-layer residual space satisfies
\begin{equation}
  \|M_{\mathrm{E2E}}\, v\| \;\leq\; 2 \max_i p^\star_i (1 - p^\star_i) \;\cdot\; \sigma_{\max}(\mathcal{J})^{2} \;\cdot\; \|v\|,
  \label{eq:e2e_dilution}
\end{equation}
whereas $\|M_{\mathrm{lw}}\, v\| = \|v\|$. Inequality~\eqref{eq:e2e_dilution} bounds the operator norm of $M_{\mathrm{E2E}}$ by the product of the categorical-Fisher spectral norm and the squared spectral norm of the downstream Jacobian: when $p^\star$ is sharply peaked and $\mathcal{J}$ is non-expansive, both factors are small and E2E gradients can be strongly attenuated, with the second factor decaying geometrically only when the downstream Jacobians are strictly contractive on average across depth.
\end{proposition}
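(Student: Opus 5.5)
The bound follows from submultiplicativity of the operator norm applied to the factorization $M_{\mathrm{E2E}} = \mathcal{J}^{\top} H_{\mathrm{KL}} \mathcal{J}$ from Proposition~\ref{prop:metrics}, together with the spectral bound of Lemma~\ref{lem:fisher_bound}. Fix a residual direction $v$ and write $M_{\mathrm{E2E}} v = \mathcal{J}^{\top}\bigl(H_{\mathrm{KL}}(\mathcal{J} v)\bigr)$. We peel off the three factors one at a time.

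First, $\|\mathcal{J} v\| \le \sigma_{\max}(\mathcal{J})\,\|v\|$ by definition of the spectral norm. Next, $\|H_{\mathrm{KL}} u\| \le \sigma_{\max}(H_{\mathrm{KL}})\,\|u\|$ for $u = \mathcal{J}v$, and Lemma~\ref{lem:fisher_bound} gives $\sigma_{\max}(H_{\mathrm{KL}}) \le 2\max_i p^\star_i(1-p^\star_i)$. Finally, $\|\mathcal{J}^{\top} w\| \le \sigma_{\max}(\mathcal{J}^{\top})\|w\| = \sigma_{\max}(\mathcal{J})\|w\|$, since a matrix and its transpose share singular values. Chaining the three inequalities yields \eqref{eq:e2e_dilution}.

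Two bookkeeping points need care. When the residual covers all $B\cdot T$ positions, $H_{\mathrm{KL}}$ is block-diagonal with one categorical Fisher per position, each with its own $p^\star$. Its spectral norm is then the maximum over blocks, so the bound holds with $\max_i p^\star_i(1-p^\star_i)$ read as a maximum over positions as well. If the E2E loss averages over positions, a harmless constant scaling appears, which can be absorbed into $H_{\mathrm{KL}}$. The companion statement $\|M_{\mathrm{lw}} v\| = \|v\|$ is immediate from $M_{\mathrm{lw}} = I$ in \eqref{eq:M_lw}.

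The qualitative remarks follow from these bounds. For peaked $p^\star$, Lemma~\ref{lem:fisher_bound} makes the first factor vanish. $\mathcal{J}$ is the composition of the $L-\ell$ downstream layer Jacobians with the RMSNorm and LM-head Jacobians, so submultiplicativity gives $\sigma_{\max}(\mathcal{J}) \le \prod_k \sigma_{\max}(J_k)$. This product decays geometrically only when the per-layer norms are below one on average; under mere non-expansiveness it stays bounded but need not shrink. The main obstacle is not this algebra but the modeling assumption behind it, namely that the local Gauss--Newton form of Proposition~\ref{prop:metrics} is valid, so that $M_{\mathrm{E2E}}$ really factors as stated. We take that as given from the earlier proposition.
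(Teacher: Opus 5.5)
Your proof is correct and matches the paper's: both apply submultiplicativity of the operator norm to $M_{\mathrm{E2E}} = \mathcal{J}^{\top} H_{\mathrm{KL}} \mathcal{J}$ (using $\sigma_{\max}(\mathcal{J}^{\top}) = \sigma_{\max}(\mathcal{J})$), then invoke Lemma~\ref{lem:fisher_bound}, and read off the layer-wise case from $M_{\mathrm{lw}} = I$. Your note that a multi-position $H_{\mathrm{KL}}$ is block-diagonal, so the maximum must also be taken over positions, is a useful clarification the paper leaves implicit.
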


\begin{proof}[Proof of Proposition~\ref{prop:metrics}]
The layer-wise reconstruction loss $\mathcal{L}_{\mathrm{lw}} = \tfrac{1}{2}\|\varepsilon_\ell\|^2$ has exact Hessian $I$ in $\varepsilon_\ell$, so $M_{\mathrm{lw}} = I$. For E2E, the forward-KL loss against the teacher's softmax distribution $p^\star$ (equivalently, cross-entropy with the fixed target $p^\star$) has Hessian $H_{\mathrm{KL}} = \mathrm{diag}(p^\star) - p^\star p^{\star\top}$ in the pre-softmax logits $z$ (the categorical Fisher information of $p^\star$). Composing with $\mathcal{J} = \partial z / \partial \varepsilon_\ell$ via the chain rule, the Hessian of the E2E loss in $\varepsilon_\ell$ is $\mathcal{J}^{\top} H_{\mathrm{KL}} \mathcal{J} + \sum_i (p_i - p^\star_i)\, \nabla^2_{\varepsilon_\ell} z_i$. The local Gauss--Newton approximation drops the residual second-derivative term --- which vanishes exactly when the student matches the teacher downstream and is small in a neighborhood of that point --- giving $M_{\mathrm{E2E}} = \mathcal{J}^{\top} H_{\mathrm{KL}} \mathcal{J}$.
\end{proof}

\begin{proof}[Proof of Proposition~\ref{prop:lookahead_metric}]
Taylor-expanding $F_{\ell+1}$ around $F_\ell(h^\star_\ell)$ gives
$F_{\ell+1}(F_\ell(h^\star_\ell) + \varepsilon_\ell) - F_{\ell+1}(F_\ell(h^\star_\ell)) = J_{\ell+1}\, \varepsilon_\ell + O(\|\varepsilon_\ell\|^2)$,
where $\varepsilon_\ell = Q_\ell(\hat h_\ell;\theta_q^{(\ell)}) - F_\ell(h^\star_\ell)$ and $J_{\ell+1} = \partial F_{\ell+1}/\partial h$ at $h = F_\ell(h^\star_\ell)$. Substituting into $\mathcal{L}_{\mathrm{la}}$, which reduces to $\tfrac{1}{2}\|\cdot\|^2$ below the Huber threshold (Section~\ref{sec:lookahead}), yields the local quadratic form $\tfrac{1}{2}\,\varepsilon_\ell^{\top}\, J_{\ell+1}^{\top} J_{\ell+1}\, \varepsilon_\ell + O(\|\varepsilon_\ell\|^3)$, hence $M_{\mathrm{la}} = J_{\ell+1}^{\top} J_{\ell+1}$.
\end{proof}

\begin{proof}[Proof of Proposition~\ref{prop:dilution}]
By Proposition~\ref{prop:metrics} and sub-multiplicativity of operator norms,
$\|M_{\mathrm{E2E}}\, v\| \leq \sigma_{\max}(\mathcal{J})^{2}\, \sigma_{\max}(H_{\mathrm{KL}})\, \|v\|$.
Applying Lemma~\ref{lem:fisher_bound} to bound $\sigma_{\max}(H_{\mathrm{KL}})$ gives the stated inequality. The LW case is immediate from $M_{\mathrm{lw}} = I$.
\end{proof}

\section{Direction-Stratified Anisotropy and Top-1 Agreement}
\label{app:e2e_anisotropy}

We empirically distinguish two scenarios consistent with the operator-norm bound of Section~\ref{sec:supervision}: \emph{(i) uniform shrinkage} (compensable by a larger learning rate), or \emph{(ii) anisotropic attenuation} where the leading subspace of $M_{\mathrm{E2E}}$ is small and the rest --- including KV-cache-reducing directions --- decays sharply outside it (no scalar reweighting can repair it). Direct measurement on AIME25 traces confirms (ii). A token-weighted top-1 agreement comparison against the BF16 teacher provides the corresponding downstream confirmation.

\paragraph{Setup.}
We compare the LW ($k{=}0$) student of Table~\ref{tab:method_ablation_lookahead} and a controlled-E2E baseline (Qwen3-1.7B W3G128, trained from the same BF16 layer-wise checkpoint and calibration corpus as LAQuant but with E2E loss in place of the layer-wise objective) on Qwen3-1.7B; the only thing differing between the two is the supervision. For each of $16$ AIME25 responses $\times$ $6$ highest-entropy positions $\times$ $\ell \in \{4,8,12,16,20,24\}$ ($453$ valid triples after OOMs at small $\ell$) we measure $\|M_{\mathrm{E2E}} v\|/\|v\|$ at $v \in \{v_\star, v_r, g_K^{\mathrm{LW}}, g_V^{\mathrm{LW}}, g_K^{\mathrm{ctrl\text{-}E2E}}, g_V^{\mathrm{ctrl\text{-}E2E}}\}$, where $v_\star$ is the leading eigenvector of $M_{\mathrm{E2E}}$ from $\leq 20$ power-iter steps, $v_r$ a random Gaussian, and $g_{K/V}^X = \nabla_{h_\ell[p]}\|K_{\ell+1}^X(h_\ell) - K_{\ell+1}^*(h_\ell)\|^2$ is the residual direction that maximally reduces next-layer K- (resp.\ V-) cache error for student $X$. JVP via \texttt{torch.func.jvp} (exact, eager attention); VJP via reverse-mode AD; causal sequence truncation to $[:p+1]$. Entropy stratification ($H \geq 2$ nats) avoids the near-deterministic regime where $H_{\mathrm{KL}}$ is null-rank in finite precision.

\paragraph{Anisotropy of $M_{\mathrm{E2E}}$.}
Table~\ref{tab:anisotropy_overall} reports the medians: the leading eigendirection of $M_{\mathrm{E2E}}$ is $\sim$30--40$\times$ more responsive than every other direction we test, and the K/V-cache-reducing directions sit in the attenuated regime alongside random Gaussians. By contrast, $M_{\mathrm{LW}} = I$ (Proposition~\ref{prop:metrics}) assigns responsiveness exactly $1.0$ to every direction. The observed $\geq 30\times$ ratio rules out the uniform-shrinkage alternative, which would require every direction's responsiveness to be near $1.0$ relative to $v_\star$ (and would in principle be compensable by a learning-rate scaling); we observe $\leq 0.05$ at the median for every non-leading direction tested, with the pattern robust across all six $\ell$ values.

\paragraph{$\sigma_{\max}(\mathcal{J})$ and bound looseness.}
Direct power iteration on $\mathcal{J}^{\!\top}\mathcal{J}$ gives median $\sigma_{\max}(\mathcal{J}) = 40.8$ (range $9.1$--$479.2$, growing with the count of downstream layers): $\mathcal{J}$ is mildly expansive in this regime. The operator-norm bound's RHS $2 \max_i p^\star_i(1-p^\star_i) \cdot \sigma_{\max}(\mathcal{J})^2$ therefore exceeds the directly measured $\lambda_{\max}(M_{\mathrm{E2E}})$ by $4$--$5$ orders of magnitude --- the bound holds but is loose, since it assumes worst-case alignment between $\mathcal{J}$'s leading singular vector and $H_{\mathrm{KL}}$'s leading eigenvector that holds with probability $O(1/V)$ in generic alignment. The dilution claim does not depend on the bound's tightness: the direction-stratified ratio is invariant to the absolute scale of $\sigma_{\max}(\mathcal{J})$, which cancels as a common multiplier.

\paragraph{Cross-student magnitudes.}
The controlled-E2E student's residual K-cache gradient is $1.52\times$ larger than the LW student's (median $\|g_K^{\mathrm{ctrl\text{-}E2E}}\|/\|g_K^{\mathrm{LW}}\|$, IQR $1.40$--$1.69$); the V-cache ratio is $1.33$ (IQR $1.22$--$1.48$). E2E supervision left $\sim$50\% more residual K-cache error than LW under matched training, on a residual direction $M_{\mathrm{E2E}}$ attenuates $\sim$30$\times$ regardless of student ($\cos(g_K^{\mathrm{LW}}, g_K^{\mathrm{ctrl\text{-}E2E}}) = 0.67$, IQR $0.46$--$0.74$). $M_{\mathrm{LW}} = I$ avoids this failure mode entirely.

\paragraph{Top-1 agreement with the BF16 teacher.}
Table~\ref{tab:top1_agreement} reports the fraction of next-token positions where the BF16 teacher's argmax matches each model's argmax, weighted by tokens. LAQuant attains the highest agreement at $94.14\%$, followed by LW ($k=0$, LAQuant without the lookahead loss) at $93.95\%$, the original E2E baseline (ReasoningQAT) at $93.07\%$, and the controlled E2E baseline at $88.44\%$. The controlled-E2E gap (which isolates the loss-shape variable from calibration data and initialization, both shared with LAQuant) supports the dilution argument: holding everything else fixed, switching from LW to E2E supervision \emph{reduces} fidelity to the BF16 teacher's predictions, despite the E2E loss explicitly optimizing for that fidelity. The LA$-$LW gap is small in aggregate ($+0.20$ pp) but concentrates at high-entropy positions where decisions actually compete (Table~\ref{tab:top1_entropy}): negligible at $H<0.1$ ($+0.004$ to $+0.050$ pp, $\sim$56\% of tokens) but reaching $+0.77$ pp at $H\in[1,2)$ and $+2.50$ pp at $H\in[2,5)$.

\begin{table}[!ht]
\centering
\caption{Median (IQR) of $\|M_{\mathrm{E2E}} v\| / \|M_{\mathrm{E2E}} v_\star\|$ across 453 (sample, position, layer) triples on Qwen3-1.7B AIME25 traces. By Proposition~\ref{prop:metrics}, $M_{\mathrm{LW}} = I$ assigns responsiveness $1.0$ to every direction.}
\label{tab:anisotropy_overall}
\small
\setlength{\tabcolsep}{6pt}
\renewcommand{\arraystretch}{0.95}
\begin{tabular}{lc}
\toprule
direction & median (q25, q75) \\
\midrule
$v_\star$ (leading eigvec)        & $1.000$ \\
random $v_r$                      & $0.025$ ($0.017$, $0.034$) \\
$g_K^{\mathrm{LW}}$               & $0.037$ ($0.025$, $0.053$) \\
$g_K^{\mathrm{ctrl\text{-}E2E}}$  & $0.037$ ($0.026$, $0.055$) \\
$g_V^{\mathrm{LW}}$               & $0.027$ ($0.019$, $0.037$) \\
$g_V^{\mathrm{ctrl\text{-}E2E}}$  & $0.029$ ($0.020$, $0.042$) \\
\bottomrule
\end{tabular}
\end{table}

\begin{table}[!ht]
\centering
\begin{minipage}[t]{0.34\linewidth}
\centering
\caption{Token-weighted top-1 argmax agreement with the BF16 teacher on Qwen3-1.7B AIME25 traces (488{,}455 tokens, 120 traces).}
\label{tab:top1_agreement}
\small
\setlength{\tabcolsep}{4pt}
\renewcommand{\arraystretch}{0.95}
\begin{tabular}{lc}
\toprule
Model (W3G128) & Agreement \\
\midrule
LAQuant ($k{=}1$)        & \textbf{94.14\%} \\
LW ($k{=}0$)             & 93.95\% \\
E2E (ReasoningQAT)       & 93.07\% \\
E2E-Ctrl                 & 88.44\% \\
\bottomrule
\end{tabular}
\end{minipage}\hfill
\begin{minipage}[t]{0.62\linewidth}
\centering
\caption{Top-1 agreement vs the BF16 teacher, stratified by teacher entropy $H$. The LA$-$LW gap grows with $H$: at sharply peaked positions ($H<0.1$, $\sim$56\% of tokens) LA and LW are indistinguishable; at uncertain positions ($H\geq 1$) LA preserves the teacher's choice noticeably more often.}
\label{tab:top1_entropy}
\small
\setlength{\tabcolsep}{4pt}
\renewcommand{\arraystretch}{0.95}
\begin{tabular}{lrrrrr}
\toprule
$H$ range & frac. & LA & LW & E2E & LA$-$LW \\
\midrule
$[0, 0.01)$   & 45.6\% & 99.98\% & 99.97\% & 99.94\% & $+0.004$ \\
$[0.01, 0.1)$ & 10.6\% & 99.81\% & 99.76\% & 99.57\% & $+0.050$ \\
$[0.1, 0.5)$  & 15.1\% & 98.95\% & 98.75\% & 98.02\% & $+0.199$ \\
$[0.5, 1.0)$  & 16.6\% & 86.29\% & 85.92\% & 83.83\% & $+0.372$ \\
$[1.0, 2.0)$  & 11.6\% & 72.97\% & 72.20\% & 68.94\% & $\mathbf{+0.766}$ \\
$[2.0, 5.0)$  & 0.6\%  & 58.21\% & 55.70\% & 50.40\% & $\mathbf{+2.502}$ \\
\bottomrule
\end{tabular}
\end{minipage}
\end{table}

\section{Peaky Softmax Distribution on AIME25}
\label{app:peaky_softmax}

We empirically validate that the BF16 teacher's softmax distribution on LRM decoding is sharply peaked, so the spectral bound $\sigma_{\max}(H_{\mathrm{KL}}) \leq 2 \max_i p^\star_i(1-p^\star_i)$ used in Section~\ref{sec:supervision} is small in practice.

We collect 488{,}455 next-token softmax distributions from Qwen3-1.7B (BF16) decoding 120 AIME25 traces. The teacher's top-1 probability has mean $0.877$ and median $0.996$; $67.6\%$ of tokens have $\max_i p^\star_i > 0.9$ and $53.4\%$ have $\max_i p^\star_i > 0.99$. By Lemma~\ref{lem:fisher_bound}, for the latter group $\sigma_{\max}(H_{\mathrm{KL}}) \leq 2 \cdot 0.99 \cdot 0.01 \approx 0.02$, more than an order of magnitude below the worst-case bound of $1/2$ (a factor of $\sim 25\times$).

\section{Top-$k$ Sensitivity of Hessian-Subspace Overlap}
\label{app:topk_sensitivity}

The principal-angle diagnostic of Section~\ref{sec:calibdata} uses the average top-$k$ cosine $\bar c_k$ between $H_{\mathrm{act}}^{\mathrm{cal}}$ and $H_{\mathrm{act}}^{\mathrm{eval}}$, with $k=64$ in Figure~\ref{fig:hessian}(b) ($\sim$3\% of the $d{=}2048$ eigenspace). We sweep $k \in \{4, 8, 16, 32, 64, 128, 256\}$ to confirm the calibration ranking does not depend on this choice.

Following the Figure~\ref{fig:hessian}(b) setup, we compute $\bar c_k$ across $112$ (layer, hook) keys ($28$ transformer layers $\times$ $4$ attention/MLP hook points) on Qwen3-1.7B activations, between AIME25 and four candidate calibration corpora. Table~\ref{tab:topk_sensitivity} reports the mean cosine overlap.

\paragraph{Setup.}
For each of $4$ hook sites per transformer layer on Qwen3-1.7B ($28$ layers $\times$ $4$ sites $=112$ keys; the four sites are the inputs to \texttt{q\_proj} (attention input, shared by q/k/v), \texttt{o\_proj} (attention output), \texttt{gate\_proj} (MLP input, shared by gate/up), and \texttt{down\_proj} (MLP hidden)), we collect activations from $128$ sequences $\times$ $2{,}048$ tokens per corpus and accumulate the uncentered second-moment matrix $H_{\mathrm{act}} = \sum_t \phi_t \phi_t^{\top}$. Eigendecomposition uses \texttt{torch.linalg.eigh} truncated to the top $256$ eigenvectors per (corpus, key); per-pair cosines are arithmetic-meaned across the $112$ keys. Table~\ref{tab:topk_sensitivity} reports the resulting mean cosine overlap between AIME25 and four candidate calibration corpora.

The ranking $\text{OpenR1-Math} > \text{OpenThoughts3} > \text{ReasoningQAT} > \text{WikiText2+C4+RedPajama}$ holds at every $k \in [4, 256]$, and the reasoning-vs-generic gap stays $\geq 0.21$ cosine units throughout. A mild U-shape minimized near $k \approx 32$--$64$ reflects that small $k$ over-weights a few high-variance directions, while large $k$ admits low-variance directions that are nearly uniformly shared across corpora; $k=64$ sits at the discriminating regime.

\begin{table}[!ht]
\centering
\caption{Mean top-$k$ principal-angle cosine $\bar c_k$ between four calibration corpora and the AIME25 deployment distribution on Qwen3-1.7B, averaged across 112 (layer, hook) keys. W2 and RP denote WikiText2 and RedPajama, respectively.}
\label{tab:topk_sensitivity}
\small
\setlength{\tabcolsep}{4pt}
\renewcommand{\arraystretch}{0.95}
\begin{tabular}{lccccccc}
\toprule
Calibration corpus & $k{=}4$ & $k{=}8$ & $k{=}16$ & $k{=}32$ & $k{=}64$ & $k{=}128$ & $k{=}256$ \\
\midrule
OpenR1-Math (LAQuant)  & 0.947 & 0.915 & 0.886 & 0.864 & \textbf{0.846} & 0.836 & 0.835 \\
OpenThoughts3          & 0.757 & 0.721 & 0.701 & 0.691 & \textbf{0.686} & 0.695 & 0.716 \\
ReasoningQAT mix       & 0.711 & 0.689 & 0.674 & 0.669 & \textbf{0.666} & 0.676 & 0.697 \\
W2+C4+RP (generic)     & 0.476 & 0.468 & 0.450 & 0.451 & \textbf{0.455} & 0.478 & 0.517 \\
\bottomrule
\end{tabular}
\end{table}

\section{Details of Evaluation Settings}
\label{app:eval_settings}
\paragraph{Evaluation Settings.}
We use lighteval~\cite{lighteval} for reasoning and lm\_eval~\cite{lmeval} for non-reasoning benchmarks, both on vLLM v0.10.1.1~\cite{vllm}. We follow each model's original sampling: temperature 0.6 with \texttt{top\_p}=0.95, plus \texttt{top\_k}=20 for Qwen3. 

For AIME24, AIME25, GPQA, and LSAT-AR, we generate 16 responses for each problem and compute Pass@1 from the generated samples. For MMLU-Pro and LiveCodeBench, we generate a single response per problem and compute Pass@1 accordingly. Due to the limited number of problems in AIME24 and AIME25, which leads to relatively high statistical variability, we report results averaged over four random seeds, 0, 1, 2, and 3, to obtain more reliable estimates. For LiveCodeBench, we also report the average over three random seeds, 0, 1, and 42. For all other reasoning tasks, we use a fixed random seed of 42.

For DeepSeek-R1-distilled Llama-3.1, a known issue has been reported in which the model may skip the thinking process even when evaluated in thinking mode. To avoid this issue, we explicitly prepend the \texttt{<think>} token during all evaluations\footnote{\url{https://huggingface.co/deepseek-ai/DeepSeek-R1-Distill-Llama-8B\#usage-recommendations}}.
\clearpage
\section{Additional Evaluation Results}
\subsection{Reasoning Tasks}
\label{app:full_reason_tasks}

\paragraph{Sample budget and variability.}
For AIME24, AIME25, GPQA, and LSAT-AR, we generate $16$ responses per problem to estimate Pass@1. Since AIME24 and AIME25 each contain only $30$ competition problems, their estimates exhibit relatively high run-to-run variability. We therefore additionally average the results over four random seeds ($0, 1, 2, 3$), yielding a total of $64$ samples per problem for these benchmarks. Table~\ref{tab:res_main_reasoning_std} reports the mean and standard deviation of Pass@1 for each method, allowing readers to assess the statistical variability together with the central estimate.

\paragraph{ReasoningQAT (R-QAT) training data.}
R-QAT uses a public configuration whose calibration data differ from those of LAQuant. In phase 1, corresponding to layer-wise QAT, R-QAT uses a mixture of reasoning data (OpenThoughts-3) and generic calibration data with $4{,}096$ samples and a maximum sequence length of $2{,}048$. In phase 2, corresponding to end-to-end QAT, it further uses reasoning-only data from OpenThoughts-3 with $32{,}768$ samples and a maximum sequence length of $8{,}192$. Thus, although the calibration corpus is not exactly matched to ours, R-QAT is trained with a substantially larger number of reasoning-domain training tokens than LAQuant. Because reproducing both phases under a fully controlled calibration setup is non-trivial, we evaluate R-QAT using its prescribed public configuration in all comparisons. We instead provide the matched-calibration comparison through ParoQuant++, which uses the same reasoning-domain corpus and training data budget as LAQuant.

\begin{table}[!ht]
\centering
\caption{Pass@N$^{\uparrow}$ and standard deviations on \textbf{AIME}, \textbf{GPQA Diamond}, and \textbf{LSAT-AR} under 3-bit and 4-bit quantization. R-QAT and ParoQ denote ReasoningQAT and ParoQuant, respectively. Results on AIME benchmarks are averaged over four random seeds, 0, 1, 2, and 3.}
\label{tab:res_main_reasoning_std}

\begin{subtable}{\textwidth}
\centering
\caption{W3G128 Pass@N accuracy on reasoning tasks.}
\label{tab:res_w3_reason_std}
\scriptsize
\setlength{\tabcolsep}{6pt}
\renewcommand{\arraystretch}{0.9}
\resizebox{\textwidth}{!}{%
\begin{tabular}{lllcccccccc}
\toprule[1.2pt]
\multirow{2}{*}{\textbf{Model}} 
& \multirow{2}{*}{\textbf{Method}} 
& \multirow{2}{*}{\textbf{Dataset}}
& \multicolumn{2}{c}{\textbf{AIME24}}
& \multicolumn{2}{c}{\textbf{AIME25}}
& \multicolumn{2}{c}{\textbf{GPQA}}
& \multicolumn{2}{c}{\textbf{LSAT}} \\
\cmidrule(lr){4-5} \cmidrule(lr){6-7} \cmidrule(lr){8-9} \cmidrule(lr){10-11}
& & 
& \textbf{Pass@1} & \textbf{Pass@64}
& \textbf{Pass@1} & \textbf{Pass@64}
& \textbf{Pass@1} & \textbf{Pass@16}
& \textbf{Pass@1} & \textbf{Pass@16} \\
\midrule

\multirow{7}{*}{Qwen3-1.7B}
& FP16  &        & 45.84 $\pm$ 6.20 & 83.30 & 35.94 $\pm$ 4.58 & 73.30 & 40.70 $\pm$ 2.08 & 78.28 & 64.10 $\pm$ 2.53 & 95.65 \\
\cmidrule(lr){2-11}
& GPTQ  & Generic & 0.00 $\pm$ 0.00 & 0.00 & 0.00 $\pm$ 0.00 & 0.00 & 25.03 $\pm$ 3.05 & 81.82 & 18.78 $\pm$ 2.72 & 76.96 \\
& GPTQ++  & OpenR1  & 17.14 $\pm$ 4.48 & 60.00 & 22.55 $\pm$ 4.35 & 46.70 & 28.88 $\pm$ 1.77 & 72.22 & 34.40 $\pm$ 2.36 & 80.43 \\
& R-QAT & Mixed   & 25.52 $\pm$ 6.26 & 73.30 & 23.70 $\pm$ 4.79 & 60.00 & 31.03 $\pm$ 2.68 & 75.76 & 47.69 $\pm$ 2.09 & 95.22 \\
& ParoQ & Generic & 7.24 $\pm$ 4.77 & 53.30 & 16.88 $\pm$ 4.40 & 46.70 & 30.49 $\pm$ 3.19 & 76.26 & 36.10 $\pm$ 2.68 & 81.30 \\
& ParoQ++ & OpenR1  & 33.65 $\pm$ 5.93 & 73.30 & 28.23 $\pm$ 4.28 & 56.70 & 30.81 $\pm$ 2.61 & 76.26 & 49.30 $\pm$ 2.38 & 93.48 \\
& LAQuant  & OpenR1  & 34.64 $\pm$ 6.10 & 80.00 & 29.33 $\pm$ 4.87 & 60.00 & 31.79 $\pm$ 2.16 & 75.76 & 54.86 $\pm$ 2.11 & 93.91 \\
\midrule

\multirow{7}{*}{Qwen3-4B}
& FP16  &        & 73.23 $\pm$ 4.52 & 90.00 & 63.75 $\pm$ 5.65 & 86.67 & 55.59 $\pm$ 2.20 & 82.32 & 81.14 $\pm$ 1.21 & 95.22 \\
\cmidrule(lr){2-11}
& GPTQ  & Generic & 12.92 $\pm$ 4.46 & 36.67 & 12.76 $\pm$ 3.59 & 40.00 & 33.65 $\pm$ 2.56 & 74.75 & 40.41 $\pm$ 1.78 & 86.09 \\
& GPTQ++  & OpenR1  & 54.95 $\pm$ 5.73 & 76.67 & 51.21 $\pm$ 4.67 & 80.00 & 41.60 $\pm$ 2.02 & 76.77 & 67.45 $\pm$ 1.40 & 90.87 \\
& R-QAT & Mixed   & 43.75 $\pm$ 6.04 & 83.33 & 35.31 $\pm$ 4.74 & 76.67 & 46.21 $\pm$ 2.91 & 78.79 & 67.64 $\pm$ 2.67 & 94.78 \\
& ParoQ & Generic & 41.30 $\pm$ 5.50 & 73.33 & 39.58 $\pm$ 5.60 & 80.00 & 44.41 $\pm$ 2.27 & 74.24 & 68.94 $\pm$ 1.86 & 90.00 \\
& ParoQ++ & OpenR1  & 59.74 $\pm$ 6.73 & 83.33 & 52.76 $\pm$ 4.70 & 83.33 & 46.17 $\pm$ 2.07 & 80.23 & 77.07 $\pm$ 1.85 & 94.35 \\
& LAQuant  & OpenR1  & 62.92 $\pm$ 5.39 & 90.00 & 54.69 $\pm$ 5.56 & 86.67 & 46.21 $\pm$ 2.11 & 77.78 & 77.15 $\pm$ 1.94 & 96.09 \\
\midrule

\multirow{7}{*}{Qwen3-8B}
& FP16  &        & 74.95 $\pm$ 4.24 & 93.33 & 67.66 $\pm$ 5.81 & 86.67 & 58.78 $\pm$ 2.49 & 87.88 & 85.57 $\pm$ 1.24 & 95.65 \\
\cmidrule(lr){2-11}
& GPTQ  & Generic & 44.48 $\pm$ 6.46 & 73.33 & 40.68 $\pm$ 6.09 & 76.70 & 48.17 $\pm$ 1.69 & 79.80 & 69.92 $\pm$ 2.41 & 92.61 \\
& GPTQ++  & OpenR1  & 68.08 $\pm$ 4.96 & 86.67 & 59.43 $\pm$ 6.07 & 83.30 & 51.20 $\pm$ 1.80 & 82.83 & 82.28 $\pm$ 1.93 & 96.09 \\
& R-QAT & Mixed   & 53.75 $\pm$ 5.83 & 83.33 & 43.91 $\pm$ 6.13 & 86.70 & 48.61 $\pm$ 2.08 & 83.84 & 77.23 $\pm$ 2.11 & 94.78 \\
& ParoQ & Generic & 59.53 $\pm$ 5.50 & 80.00 & 52.92 $\pm$ 6.13 & 83.30 & 52.30 $\pm$ 1.85 & 82.83 & 78.53 $\pm$ 1.34 & 95.65 \\
& ParoQ++ & OpenR1  & 67.19 $\pm$ 5.76 & 90.00 & 58.39 $\pm$ 6.44 & 83.30 & 53.09 $\pm$ 2.60 & 86.36 & 79.48 $\pm$ 1.87 & 96.96 \\
& LAQuant  & OpenR1  & 68.33 $\pm$ 4.75 & 93.33 & 61.30 $\pm$ 6.10 & 86.67 & 53.66 $\pm$ 2.02 & 85.86 & 83.80 $\pm$ 1.94 & 96.09 \\
\midrule

\multirow{7}{*}{R1-Distill-Llama-8B}
& FP16  &        & 43.44 $\pm$ 6.45 & 80.00 & 31.20 $\pm$ 5.32 & 70.00 & 48.17 $\pm$ 2.17 & 86.87 & 59.78 $\pm$ 2.97 & 96.52 \\
\cmidrule(lr){2-11}
& GPTQ  & Generic & 15.45 $\pm$ 6.21 & 73.33 & 19.64 $\pm$ 4.03 & 60.00 & 37.97 $\pm$ 2.57 & 85.35 & 42.47 $\pm$ 3.09 & 95.22 \\
& GPTQ++  & OpenR1  & 25.84 $\pm$ 6.26 & 83.33 & 25.73 $\pm$ 5.02 & 66.67 & 37.78 $\pm$ 2.48 & 84.34 & 46.36 $\pm$ 2.73 & 93.48 \\
& R-QAT & Mixed   & 30.16 $\pm$ 6.78 & 83.33 & 27.50 $\pm$ 4.44 & 66.67 & 40.50 $\pm$ 2.46 & 82.83 & 48.48 $\pm$ 2.32 & 94.78 \\
& ParoQ & Generic & 0.89 $\pm$ 1.81 & 13.33 & 0.05 $\pm$ 0.13 & 3.33 & 32.99 $\pm$ 2.66 & 88.38 & 29.65 $\pm$ 2.32 & 93.48 \\
& ParoQ++ & OpenR1  & 30.63 $\pm$ 6.79 & 76.67 & 25.57 $\pm$ 4.90 & 56.67 & 40.37 $\pm$ 2.38 & 86.36 & 48.89 $\pm$ 2.94 & 95.65 \\
& LAQuant  & OpenR1  & 31.46 $\pm$ 7.34 & 83.33 & 28.54 $\pm$ 3.80 & 70.00 & 43.31 $\pm$ 2.27 & 84.85 & 50.98 $\pm$ 2.43 & 92.61 \\
\bottomrule[1.2pt]
\end{tabular}%
}
\end{subtable}

\vspace{1em}

\begin{subtable}{\textwidth}
\centering
\caption{W4G128 Pass@N on reasoning tasks.}
\label{tab:res_w4_reason_std}
\scriptsize
\setlength{\tabcolsep}{6pt}
\renewcommand{\arraystretch}{0.9}
\resizebox{\textwidth}{!}{%
\begin{tabular}{lllcccccccc}
\toprule[1.2pt]
\multirow{2}{*}{\textbf{Model}} 
& \multirow{2}{*}{\textbf{Method}} 
& \multirow{2}{*}{\textbf{Dataset}}
& \multicolumn{2}{c}{\textbf{AIME24}}
& \multicolumn{2}{c}{\textbf{AIME25}}
& \multicolumn{2}{c}{\textbf{GPQA}}
& \multicolumn{2}{c}{\textbf{LSAT}} \\
\cmidrule(lr){4-5} \cmidrule(lr){6-7} \cmidrule(lr){8-9} \cmidrule(lr){10-11}
& & 
& \textbf{Pass@1} & \textbf{Pass@64}
& \textbf{Pass@1} & \textbf{Pass@64}
& \textbf{Pass@1} & \textbf{Pass@16}
& \textbf{Pass@1} & \textbf{Pass@16} \\
\midrule

\multirow{7}{*}{Qwen3-1.7B}
& FP16  &        & 45.84 $\pm$ 6.20 & 83.33 & 35.94 $\pm$ 4.58 & 73.33 & 40.70 $\pm$ 2.08 & 78.28 & 64.10 $\pm$ 2.53 & 95.65 \\
\cmidrule(lr){2-11}
& GPTQ  & Generic & 24.17 $\pm$ 5.60 & 60.00 & 25.47 $\pm$ 4.13 & 53.33 & 33.14 $\pm$ 2.06 & 77.78 & 48.29 $\pm$ 2.42 & 86.52 \\
& GPTQ++  & OpenR1  & 38.96 $\pm$ 6.78 & 80.00 & 33.33 $\pm$ 4.28 & 66.67 & 34.60 $\pm$ 2.24 & 72.73 & 61.03 $\pm$ 2.36 & 91.74 \\
& R-QAT & Mixed   & 32.61 $\pm$ 6.49 & 83.33 & 27.29 $\pm$ 5.10 & 66.67 & 35.70 $\pm$ 2.32 & 76.77 & 53.48 $\pm$ 1.74 & 93.04 \\
& ParoQ & Generic & 39.53 $\pm$ 5.82 & 80.00 & 32.55 $\pm$ 4.63 & 66.67 & 35.86 $\pm$ 2.89 & 73.23 & 58.30 $\pm$ 2.45 & 92.17 \\
& ParoQ++ & OpenR1  & 41.82 $\pm$ 5.94 & 80.00 & 32.71 $\pm$ 4.58 & 66.67 & 36.30 $\pm$ 3.09 & 72.73 & 60.52 $\pm$ 2.33 & 91.74 \\
& LAQuant  & OpenR1  & 42.35 $\pm$ 6.52 & 80.00 & 35.47 $\pm$ 4.54 & 73.33 & 36.43 $\pm$ 1.64 & 76.77 & 61.14 $\pm$ 2.23 & 91.74 \\
\midrule

\multirow{7}{*}{Qwen3-4B}
& FP16  &        & 73.23 $\pm$ 4.52 & 90.00 & 63.75 $\pm$ 5.65 & 86.67 & 55.59 $\pm$ 2.20 & 82.32 & 81.14 $\pm$ 1.21 & 95.22 \\
\cmidrule(lr){2-11}
& GPTQ  & Generic & 63.44 $\pm$ 4.75 & 80.00 & 58.67 $\pm$ 5.40 & 86.67 & 51.61 $\pm$ 1.77 & 82.32 & 78.10 $\pm$ 2.18 & 92.61 \\
& GPTQ++  & OpenR1  & 68.33 $\pm$ 4.90 & 90.00 & 62.20 $\pm$ 5.19 & 90.00 & 51.86 $\pm$ 2.37 & 77.78 & 78.61 $\pm$ 1.25 & 93.91 \\
& R-QAT & Mixed   & 64.64 $\pm$ 5.56 & 86.67 & 57.66 $\pm$ 5.56 & 80.00 & 52.15 $\pm$ 1.27 & 78.79 & 78.45 $\pm$ 1.89 & 93.91 \\
& ParoQ & Generic & 67.29 $\pm$ 5.63 & 86.67 & 60.78 $\pm$ 5.12 & 90.00 & 52.46 $\pm$ 2.16 & 79.29 & 79.43 $\pm$ 1.77 & 93.48 \\
& ParoQ++ & OpenR1  & 70.37 $\pm$ 4.83 & 90.00 & 61.72 $\pm$ 5.28 & 83.33 & 53.44 $\pm$ 2.42 & 83.84 & 80.87 $\pm$ 1.34 & 95.22 \\
& LAQuant  & OpenR1  & 70.42 $\pm$ 4.81 & 90.00 & 62.66 $\pm$ 5.79 & 86.67 & 53.60 $\pm$ 1.65 & 81.31 & 81.03 $\pm$ 2.34 & 94.78 \\
\midrule

\multirow{7}{*}{Qwen3-8B}
& FP16  &        & 74.95 $\pm$ 4.24 & 93.33 & 67.66 $\pm$ 5.81 & 86.67 & 58.78 $\pm$ 2.49 & 87.88 & 85.57 $\pm$ 1.24 & 95.65 \\
\cmidrule(lr){2-11}
& GPTQ  & Generic & 71.56 $\pm$ 5.34 & 83.33 & 65.21 $\pm$ 4.86 & 83.33 & 57.89 $\pm$ 2.35 & 82.32 & 82.31 $\pm$ 1.42 & 96.09 \\
& GPTQ++  & OpenR1  & 72.76 $\pm$ 5.02 & 93.33 & 65.16 $\pm$ 5.07 & 86.67 & 57.83 $\pm$ 2.11 & 85.35 & 83.89 $\pm$ 1.17 & 96.52 \\
& R-QAT & Mixed   & 68.44 $\pm$ 4.97 & 93.33 & 59.48 $\pm$ 6.46 & 83.33 & 58.49 $\pm$ 2.07 & 86.87 & 85.24 $\pm$ 1.25 & 96.96 \\
& ParoQ & Generic & 73.39 $\pm$ 4.70 & 93.33 & 64.52 $\pm$ 5.58 & 86.67 & 56.85 $\pm$ 2.26 & 84.34 & 84.43 $\pm$ 1.43 & 95.22 \\
& ParoQ++ & OpenR1  & 73.81 $\pm$ 5.17 & 90.00 & 64.27 $\pm$ 5.23 & 90.00 & 56.66 $\pm$ 2.38 & 83.84 & 83.99 $\pm$ 1.30 & 96.09 \\
& LAQuant  & OpenR1  & 74.17 $\pm$ 4.48 & 93.33 & 66.98 $\pm$ 4.78 & 86.67 & 57.23 $\pm$ 2.53 & 86.36 & 85.24 $\pm$ 1.77 & 96.52 \\
\midrule

\multirow{7}{*}{R1-Distill-Llama-8B}
& FP16  &        & 43.44 $\pm$ 6.45 & 80.00 & 31.20 $\pm$ 5.32 & 70.00 & 48.17 $\pm$ 2.17 & 86.87 & 59.78 $\pm$ 2.97 & 96.52 \\
\cmidrule(lr){2-11}
& GPTQ  & Generic & 38.44 $\pm$ 6.95 & 80.00 & 29.85 $\pm$ 5.05 & 66.67 & 46.65 $\pm$ 2.38 & 87.88 & 56.66 $\pm$ 2.68 & 95.22 \\
& GPTQ++  & OpenR1  & 39.53 $\pm$ 5.34 & 83.33 & 29.74 $\pm$ 4.42 & 63.33 & 45.30 $\pm$ 2.42 & 84.85 & 58.40 $\pm$ 2.17 & 95.22 \\
& R-QAT & Mixed   & 40.63 $\pm$ 5.94 & 80.00 & 29.53 $\pm$ 4.63 & 63.33 & 46.09 $\pm$ 2.09 & 84.85 & 57.01 $\pm$ 1.95 & 94.78 \\
& ParoQ & Generic & 39.12 $\pm$ 5.98 & 83.33 & 28.54 $\pm$ 5.13 & 70.00 & 46.34 $\pm$ 2.71 & 85.86 & 54.81 $\pm$ 3.19 & 98.26 \\
& ParoQ++ & OpenR1  & 38.86 $\pm$ 7.28 & 83.33 & 27.60 $\pm$ 4.66 & 66.67 & 46.46 $\pm$ 2.88 & 83.33 & 56.66 $\pm$ 2.77 & 96.96 \\
& LAQuant  & OpenR1  & 41.41 $\pm$ 5.90 & 83.33 & 31.77 $\pm$ 5.41 & 73.33 & 45.55 $\pm$ 2.88 & 86.87 & 58.67 $\pm$ 1.71 & 95.22 \\
\bottomrule[1.2pt]
\end{tabular}%
}
\end{subtable}

\end{table}
\begin{table}[!ht]
\centering
\caption{Pass@1$^{\uparrow}$ on \textbf{MMLU-Pro} and \textbf{LiveCodeBench} under 3-bit and 4-bit quantization. R-QAT and ParoQ denote ReasoningQAT and ParoQuant, respectively. Results on LiveCodeBench benchmarks are averaged over three random seeds, 0, 1, and 42.}
\label{tab:res_mmlu_lcb_main}

\begin{subtable}[t]{0.5\textwidth}
\centering
\caption{W3G128 Pass@1 results.}
\label{tab:res_mmlu_lcb_w3}
\scriptsize
\setlength{\tabcolsep}{4pt}
\renewcommand{\arraystretch}{0.9}
\resizebox{\linewidth}{!}{%
\begin{tabular}{lllcc}
\toprule[1.2pt]
\textbf{Model} & \textbf{Method} & \textbf{Dataset} & \textbf{MMLU-Pro} & \textbf{LCB} \\
\midrule
\multirow{7}{*}{Qwen3-1.7B}
& FP16   &       & 57.31 & 33.71 \\
\cmidrule(lr){2-5}
& GPTQ   & Generic & 22.34 & 0.00 \\
& GPTQ++   & OpenR1  & 35.74 & 3.73 \\
& R-QAT  & Mixed   & 47.48 & 19.90 \\
& ParoQ  & Generic & 43.79 & 12.94 \\
& ParoQ++  & OpenR1  & 45.84 & 19.40 \\
& LAQuant & OpenR1 & 47.94 & 21.02 \\
\midrule
\multirow{7}{*}{Qwen3-4B}
& FP16   &       & 70.23 & 54.48 \\
\cmidrule(lr){2-5}
& GPTQ   & Generic & 53.32 & 2.99 \\
& GPTQ++   & OpenR1  & 60.61 & 29.97 \\
& R-QAT  & Mixed   & 63.24 & 37.31 \\
& ParoQ  & Generic & 63.41 & 38.31 \\
& ParoQ++  & OpenR1  & 62.41 & 45.03 \\
& LAQuant & OpenR1 & 62.63 & 45.15 \\
\midrule
\multirow{7}{*}{Qwen3-8B}
& FP16   &       & 73.44 & 58.46 \\
\cmidrule(lr){2-5}
& GPTQ   & Generic & 65.71 & 37.94 \\
& GPTQ++   & OpenR1  & 66.29 & 46.64 \\
& R-QAT  & Mixed   & 68.33 & 44.16 \\
& ParoQ  & Generic & 68.93 & 44.65 \\
& ParoQ++  & OpenR1  & 63.32 & 51.12 \\
& LAQuant & OpenR1 & 66.86 & 51.74 \\
\midrule
\multirow{7}{*}{R1-Distill-Llama-8B}
& FP16   &       & 58.84 & 36.32 \\
\cmidrule(lr){2-5}
& GPTQ   & Generic & 47.10 & 24.13 \\
& GPTQ++   & OpenR1  & 46.75 & 26.49 \\
& R-QAT  & Mixed   & 51.05 & 29.11 \\
& ParoQ  & Generic & 44.29 & 1.12 \\
& ParoQ++  & OpenR1  & 51.44 & 29.47 \\
& LAQuant & OpenR1 & 51.70 & 31.22 \\
\bottomrule[1.2pt]
\end{tabular}%
}
\end{subtable}
\hfill
\begin{subtable}[t]{0.5\textwidth}
\centering
\caption{W4G128 Pass@1 results.}
\label{tab:res_mmlu_lcb_w4}
\scriptsize
\setlength{\tabcolsep}{4pt}
\renewcommand{\arraystretch}{0.9}
\resizebox{\linewidth}{!}{%
\begin{tabular}{lllcc}
\toprule[1.2pt]
\textbf{Model} & \textbf{Method} & \textbf{Dataset} & \textbf{MMLU-Pro} & \textbf{LCB} \\
\midrule
\multirow{7}{*}{Qwen3-1.7B}
& FP16   &       & 57.31 & 33.71 \\
\cmidrule(lr){2-5}
& GPTQ   & Generic & 50.49 & 20.40 \\
& GPTQ++   & OpenR1  & 54.21 & 28.48 \\
& R-QAT  & Mixed   & 54.44 & 25.87 \\
& ParoQ  & Generic & 54.58 & 28.61 \\
& ParoQ++  & OpenR1  & 55.44 & 28.98 \\
& LAQuant & OpenR1 & 55.58 & 31.84 \\
\midrule
\multirow{7}{*}{Qwen3-4B}
& FP16   &       & 70.23 & 54.48 \\
\cmidrule(lr){2-5}
& GPTQ   & Generic & 68.03 & 48.75 \\
& GPTQ++   & OpenR1  & 68.00 & 50.75 \\
& R-QAT  & Mixed   & 68.28 & 46.76 \\
& ParoQ  & Generic & 69.41 & 50.00 \\
& ParoQ++  & OpenR1  & 68.73 & 51.62 \\
& LAQuant & OpenR1 & 68.41 & 52.74 \\
\midrule
\multirow{7}{*}{Qwen3-8B}
& FP16   &       & 73.44 & 58.46 \\
\cmidrule(lr){2-5}
& GPTQ   & Generic & 72.67 & 52.99 \\
& GPTQ++   & OpenR1  & 72.33 & 55.10 \\
& R-QAT  & Mixed   & 72.65 & 51.62 \\
& ParoQ  & Generic & 72.43 & 55.47 \\
& ParoQ++  & OpenR1  & 72.59 & 55.72 \\
& LAQuant & OpenR1 & 71.52 & 56.59 \\
\midrule
\multirow{7}{*}{R1-Distill-Llama-8B}
& FP16   &       & 58.84 & 36.32 \\
\cmidrule(lr){2-5}
& GPTQ   & Generic & 56.78 & 34.21 \\
& GPTQ++   & OpenR1  & 56.30 & 34.45 \\
& R-QAT  & Mixed   & 55.96 & 34.33 \\
& ParoQ  & Generic & 56.69 & 33.58 \\
& ParoQ++  & OpenR1  & 56.77 & 33.58 \\
& LAQuant & OpenR1 & 54.98 & 35.70 \\
\bottomrule[1.2pt]
\end{tabular}%
}
\end{subtable}
\end{table}
\clearpage
\subsection{General Tasks}
\label{app:full_general_tasks}
\begin{table}[!ht]
\centering
\caption{Detailed zero-shot accuracy$^{\uparrow}$ results on general tasks under 3-bit and 4-bit quantization.}
\label{tab:appendix_zeroshot_general}
\small
\setlength{\tabcolsep}{6pt}
\renewcommand{\arraystretch}{0.6}
\begin{tabular}{lllccccc}
\toprule[1.2pt]
\textbf{Model} & \textbf{Method} & \textbf{Bits} 
& \textbf{ARC-C} & \textbf{ARC-E} & \textbf{BoolQ} & \textbf{Hella.} & \textbf{Avg.} \\
\midrule

\multirow{7}{*}{Qwen3-1.7B}
& FP16  & 16 & 41.30 & 73.36 & 78.99 & 46.15 & 59.95 \\
\cmidrule(lr){2-8}
& GPTQ  & 3  & 33.28 & 66.96 & 62.69 & 43.94 & 51.72 \\
& ParoQ & 3  & 40.53 & 73.95 & 73.00 & 46.64 & 58.53 \\
& LAQuant  & 3  & 39.16 & 72.39 & 76.51 & 46.37 & 58.61 \\
\cmidrule(lr){2-8}
& GPTQ  & 4  & 38.74 & 71.89 & 77.58 & 47.57 & 58.95 \\
& ParoQ & 4  & 41.98 & 74.87 & 77.46 & 48.67 & 60.75 \\
& LAQuant  & 4  & 41.72 & 74.58 & 76.06 & 48.43 & 60.20 \\

\midrule
\multirow{7}{*}{Qwen3-4B}
& FP16  & 16 & 48.12 & 79.17 & 83.24 & 54.57 & 66.28 \\
\cmidrule(lr){2-8}
& GPTQ  & 3  & 42.15 & 74.03 & 73.40 & 50.86 & 60.11 \\
& ParoQ & 3  & 49.23 & 80.55 & 81.59 & 52.67 & 66.01 \\
& LAQuant  & 3  & 48.04 & 80.43 & 80.03 & 53.05 & 65.39 \\
\cmidrule(lr){2-8}
& GPTQ  & 4  & 48.46 & 78.83 & 81.74 & 54.11 & 65.79 \\
& ParoQ & 4  & 50.00 & 81.06 & 84.04 & 54.35 & 67.36 \\
& LAQuant  & 4  & 48.38 & 79.59 & 83.45 & 54.33 & 66.44 \\

\midrule
\multirow{7}{*}{Qwen3-8B}
& FP16  & 16 & 52.65 & 81.78 & 82.60 & 58.78 & 68.95 \\
\cmidrule(lr){2-8}
& GPTQ  & 3  & 50.94 & 82.24 & 80.49 & 55.69 & 67.34 \\
& ParoQ & 3  & 52.05 & 81.57 & 83.79 & 57.01 & 68.61 \\
& LAQuant  & 3  & 53.50 & 82.62 & 84.22 & 57.07 & 69.35 \\
\cmidrule(lr){2-8}
& GPTQ  & 4  & 54.10 & 82.66 & 82.75 & 58.41 & 69.48 \\
& ParoQ & 4  & 52.39 & 82.66 & 84.50 & 58.57 & 69.32 \\
& LAQuant  & 4  & 53.50 & 82.53 & 82.72 & 58.51 & 69.32 \\

\midrule
\multirow{7}{*}{Llama3.1-8B}
& FP16  & 16 & 51.02 & 81.48 & 82.20 & 60.00 & 68.68 \\
\cmidrule(lr){2-8}
& GPTQ  & 3  & 44.28 & 76.85 & 74.89 & 56.29 & 63.08 \\
& ParoQ & 3  & 45.99 & 78.24 & 80.15 & 57.67 & 65.51 \\
& LAQuant  & 3  & 45.99 & 78.37 & 80.61 & 57.57 & 65.64 \\
\cmidrule(lr){2-8}
& GPTQ  & 4  & 50.19 & 81.48 & 82.02 & 59.31 & 68.25 \\
& ParoQ & 4  & 50.09 & 81.14 & 81.59 & 59.65 & 68.12 \\
& LAQuant  & 4  & 50.17 & 81.89 & 82.14 & 59.25 & 68.36 \\

\bottomrule[1.2pt]
\end{tabular}
\end{table}
\subsection{Decoding Throughput}
\label{app:full_decode_throughput}
\begin{table}[!ht]
\centering
\caption{Decoding (batch size = 1) throughput (tokens/sec) across different GPU platforms.}
\label{tab:decoding_throughput_all}
\small
\setlength{\tabcolsep}{4pt}
\renewcommand{\arraystretch}{0.88} 
\begin{tabular}{lccccc}
\toprule[1.2pt]
\textbf{DGX A100} & \textbf{Bits} & \textbf{Qwen3-1.7B} & \textbf{Qwen3-4B} & \textbf{Qwen3-8B} & \textbf{Llama3.1-8B} \\
\midrule
FP16       & 16 & 207.9         & 113.8         & 72.8          & 74.93 \\
ParoQuant  & 4  & 217.4 (x1.05) & 140.2 (x1.23) & 116.4 (x1.60) & 126.59 (x1.69) \\
LAQuant       & 4  & 260.8 (x1.25) & 163.1 (x1.43) & 131.4 (x1.81) & 143.21 (x1.91) \\
ParoQuant  & 3  & 285.4 (x1.37) & 214.9 (x1.89) & 203.8 (x2.80) & 233.73 (x3.12) \\
LAQuant       & 3  & 359.3 (x1.73) & 274.9 (x2.42) & 256.0 (x3.52) & 294.37 (x3.93) \\

\midrule
\textbf{RTX 4090} & \textbf{Bits} & \textbf{Qwen3-1.7B} & \textbf{Qwen3-4B} & \textbf{Qwen3-8B} & \textbf{Llama3.1-8B} \\
\midrule
FP16       & 16 & 187.3         & 92.5          & 55.0          & 56.30 \\
ParoQuant  & 4  & 271.6 (x1.45) & 168.4 (x1.82) & 114.6 (x2.08) & 122.66 (x2.18) \\
LAQuant       & 4  & 302.1 (x1.61) & 183.5 (x1.99) & 121.8 (x2.21) & 129.71 (x2.30) \\
ParoQuant  & 3  & 365.8 (x1.95) & 279.6 (x3.02) & 243.2 (x4.42) & 282.94 (x5.30) \\
LAQuant       & 3  & 426.0 (x2.27) & 322.1 (x3.48) & 276.6 (x5.03) & 322.11 (x5.72) \\

\midrule
\textbf{RTX 6000 Ada} & \textbf{Bits} & \textbf{Qwen3-1.7B} & \textbf{Qwen3-4B} & \textbf{Qwen3-8B} & \textbf{Llama3.1-8B} \\
\midrule
FP16       & 16 & 187.1         & 90.8          & 52.5          & 53.55 \\
ParoQuant  & 4  & 279.7 (x1.49) & 174.4 (x1.92) & 112.6 (x2.14) & 120.93 (x2.26) \\
LAQuant       & 4  & 313.9 (x1.68) & 191.4 (x2.11) & 122.5 (x2.33) & 129.26 (x2.41) \\
ParoQuant  & 3  & 392.9 (x2.10) & 278.2 (x3.06) & 241.3 (x4.59) & 298.15 (x5.57) \\
LAQuant       & 3  & 468.7 (x2.50) & 325.0 (x3.58) & 276.9 (x5.27) & 345.47 (x6.45) \\
\bottomrule[1.2pt]
\end{tabular}
\end{table}

\begin{figure}[!ht]
  \centering
  \includegraphics[width=\linewidth]{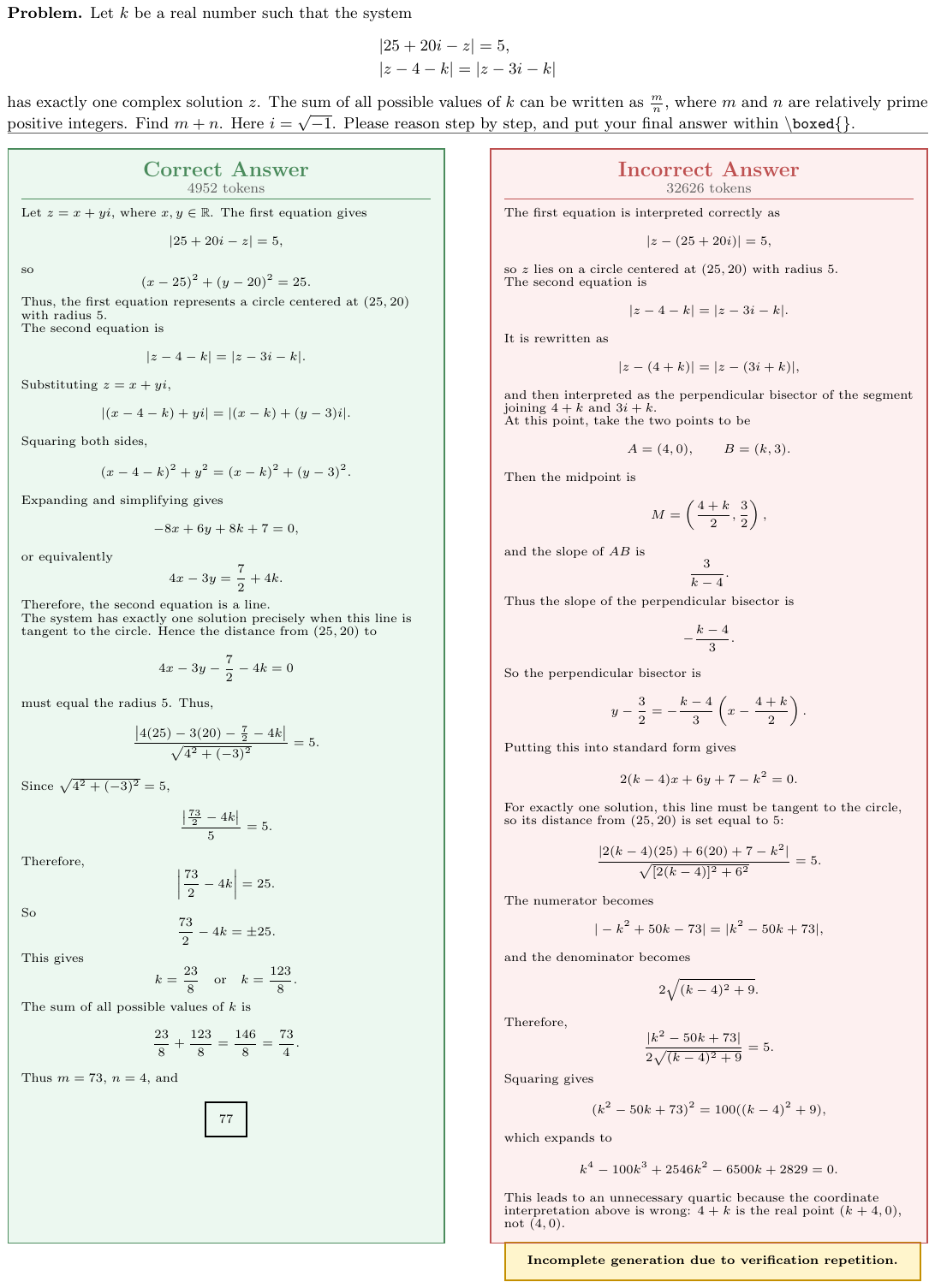}
  \caption{Comparison of generated solutions for AIME2025 Problem 7. The left panel shows the correct answer produced with LAQuant, while the right panel shows the incorrect ParoQuant output, which fails due to incomplete generation caused by verification repetition.}
  \label{fig:test}
\end{figure}
\section{Details of INT3 GEMV Kernel Modifications}
\label{app:int3_kernel}

The reference GPTQ INT3 GEMV kernel~\cite{gptq} is optimized for OPT-175B on A100 and 2$\times$A6000 configurations, employing a fixed thread block size of $\text{BLOCKWIDTH}=256$ and supporting only per-channel quantization. When applied to models in the Qwen3 and Llama families, this configuration yields suboptimal occupancy and lacks compatibility with both group-wise quantization and recent compilation infrastructure. We introduce three modifications to the kernel while preserving its original 3-bit packing format (32 weights encoded across three 32-bit words).

\paragraph{Templated thread block size.}
The reference kernel statically defines $\text{BLOCKWIDTH}=256$ input features per thread block, with a corresponding $\text{BLOCKHEIGHT}=24$ packed rows. For the matrix shapes encountered in small-to-medium models, this configuration produces an insufficient number of CUDA blocks to fully saturate modern GPUs. For instance, the $q_{\text{proj}}$ layer of Qwen3-1.7B (2048$\times$2048) issues a grid of only $8 \times 8 = 64$ blocks, corresponding to $0.6$ blocks per streaming multiprocessor (SM) on the A100. To address this, we template the kernel on $\text{BLOCKWIDTH} \in \{64, 128, 256\}$, with $\text{BLOCKHEIGHT}$ derived as $\text{BLOCKWIDTH}/32 \cdot 3$. For the matrix shapes in the Qwen3 family, $\text{BLOCKWIDTH}=128$ increases the block count by a factor of four and consistently yields the highest throughput.

\paragraph{Group-wise quantization.}
The published GPTQ kernel supports only per-channel quantization, in which a single $(\textit{scale}, \textit{zero})$ pair is shared across an entire output column. We extend the kernel to support group-wise quantization with $\text{groupsize}=128$ by storing the quantization parameters as a $(\textit{num\_groups}, \textit{outfeatures})$ tensor and reloading them at the beginning of each 32-element inner-loop iteration. As the inner loop processes exactly 32 input elements per iteration and the group size is a multiple of 32, group boundaries are guaranteed to align with iteration boundaries, ensuring that the additional control flow incurs negligible runtime cost. This modification brings the kernel into alignment with prevailing quantization recipes and improves accuracy at equivalent bit-widths.

\paragraph{Custom-op registration.}
The reference kernel is exposed only through a pybind11 binding, which prevents \texttt{torch.compile} with \texttt{fullgraph=True} from tracing through it. We register the kernel as a \texttt{torch.library.custom\_op} with a fake-tensor implementation, allowing the kernel to be captured into the compiled decode graph alongside the rest of the model.

\section{Additional Ablation Study}
\label{app:additional_abl_study}
We conduct ablation studies using Qwen3-1.7B under 3-bit quantization. Unless otherwise specified, we use 4,096 training samples, a maximum sequence length of 2,048, $k=1$ one-layer lookahead, and 20 training epochs per layer, varying only the target factor in each study.
\begin{table}[!ht]
\centering
\caption{Ablation studies on LAQuant design choices, including training set size and training sequence length.}
\label{tab:ablation_additional}

\begin{subtable}[!ht]{0.45\textwidth}
\centering
\caption{Training set size.}
\label{tab:ablation_data}
\scriptsize
\setlength{\tabcolsep}{3pt}
\renewcommand{\arraystretch}{0.95}
\begin{tabular}{lcccc}
\toprule
 & 1024 & 2048 & 4096 & 8192 \\
\midrule
AIME25 & 24.17 & 27.03 & 29.33 & 25.89 \\
LSAT   & 47.88 & 52.26 & 54.86 & 54.10 \\
Avg.   & 36.03 & 39.65 & \textbf{42.10} & 40.00 \\
\bottomrule
\end{tabular}
\end{subtable}
\hfill
\begin{subtable}[!ht]{0.45\textwidth}
\centering
\caption{Training sequence length.}
\label{tab:ablation_seqlen}
\scriptsize
\setlength{\tabcolsep}{3pt}
\renewcommand{\arraystretch}{0.95}
\begin{tabular}{lccc}
\toprule
 & 1024 & 2048 & 4096 \\
\midrule
AIME25 & 29.12 & 29.33 & 29.53 \\
LSAT   & 52.42 & 54.86 & 51.68 \\
Avg.   & 40.77 & \textbf{42.10} & 40.61 \\
\bottomrule
\end{tabular}
\end{subtable}
\end{table}
\paragraph{Training Set Size.}
As shown in Table~\ref{tab:ablation_data}, increasing the number of training samples consistently improves the average accuracy from 36.03 to 42.10 when the training set size is increased from 1,024 to 4,096. However, further increasing the size to 8,192 lowers the average accuracy to 40.00, indicating that the gain saturates beyond 4,096 samples. Therefore, we use 4,096 training samples as the default setting.

\paragraph{Training Sequence Length.}
As shown in Table~\ref{tab:ablation_seqlen}, increasing the sequence length from 1,024 to 2,048 improves the average accuracy. However, the gain saturates beyond 2,048 tokens, and further increasing the sequence length to 4,096 reduces the average accuracy to 40.61. Since longer sequences also increase training cost, 2,048 provides the best trade-off and is used as the default maximum sequence length.

\begin{table}[!ht]
\centering
\caption{Effect of training-token segment location on reasoning performance. Values denote Pass@1$^{\uparrow}$ on five reasoning benchmarks and their average.}
\label{tab:segment_ablation}
\scriptsize
\setlength{\tabcolsep}{4pt}
\renewcommand{\arraystretch}{0.85}
\begin{tabular}{llcccccc}
\toprule[1.2pt]
\textbf{Location} & \textbf{Bits} & \textbf{AIME25} & \textbf{GPQA} & \textbf{LSAT} & \textbf{MMLU-Pro} & \textbf{LCB} & \textbf{Avg.} \\
\midrule
& 16 & 35.94 & 40.70 & 64.10 & 57.31 & 34.33 & 46.48 \\
\midrule
Prefix-2k          & 3 & 29.33 & 31.79 & 54.86 & 47.94 & 21.64 & \textbf{37.11} \\
Response-2k & 3 & 28.54 & 31.90 & 51.14 & 46.82 & 19.40 & 35.56 \\
Middle-2k   & 3 & 29.79 & 31.50 & 50.98 & 47.02 & 19.40 & 35.74 \\
Suffix-2k     & 3 & 26.88 & 31.09 & 53.86 & 46.82 & 20.52 & 35.83 \\
\midrule
Prefix-2k   & 4 & 35.47 & 36.43 & 61.14 & 55.98 & 32.46 & \textbf{44.30} \\
Response-2k & 4 & 34.48 & 36.40 & 60.68 & 55.07 & 30.97 & 43.52 \\
Middle-2k   & 4 & 33.80 & 34.40 & 58.61 & 55.59 & 33.96 & 43.27 \\
Suffix-2k     & 4 & 34.43 & 36.99 & 58.78 & 55.44 & 29.85 & 43.10 \\
\bottomrule[1.2pt]
\end{tabular}
\end{table}
\paragraph{Training-Token Segment Location.}
We further examine whether the location of the training tokens within each chat-templated prompt--response sequence affects quantization performance, using a 2k-token segment, where 2k denotes 2,048 tokens. Specifically, \emph{Prefix-2k} uses the first 2k tokens of the full sequence, \emph{Response-2k} uses the first 2k tokens after \texttt{<im\_start>assistant}, \emph{Middle-2k} randomly samples a 2k-token span from the middle region after excluding the first and last 2k tokens, and \emph{Suffix-2k} uses the last 2k tokens. As shown in Table~\ref{tab:segment_ablation}, Prefix-2k achieves the highest average accuracy under both W3G128 and W4G128. 

The results suggest that aligning the training segment with the token sequence observed during actual autoregressive decoding is important for reasoning-domain QAT. Although the response tokens are the primary generation target, the prompt tokens remain in the KV cache throughout decoding and continuously participate in attention computation. Therefore, training on the prefix segment, which contains both the prompt and the early response context, better matches the inference-time computation than using only response-side or later tokens. This helps preserve the prompt-conditioned KV cache and provides more effective calibration signals for long-form reasoning generation.
\section{Additional Experiments}
\begin{table}[!ht]
\centering
\caption{Effect of direct K-cache supervision on E2E QAT. Pass@1 results are reported for the vanilla E2E baseline and E2E with an additional per-layer K-cache RMSE loss.}
\label{tab:e2e_k_rmse}
\scriptsize
\setlength{\tabcolsep}{2pt}
\renewcommand{\arraystretch}{0.75}
\begin{tabular}{lcccccc}
\toprule[1.2pt]
\textbf{Method} & \textbf{AIME25} & \textbf{GPQA} & \textbf{LSAT} & \textbf{MMLU-Pro} & \textbf{LCB} & \textbf{Avg.} \\
\midrule
E2E           & 6.62 & 28.28 & 26.14    & 29.98    & 4.48   & 19.10 \\
E2E w/ K-cache RMSE    & 8.60 & 29.86 & 27.69 & 31.77 & 5.60 & 20.70 \\
\bottomrule[1.2pt]
\end{tabular}
\end{table}
\paragraph{Effect of Direct K-Cache Supervision.}
Table~\ref{tab:e2e_k_rmse} provides an additional controlled experiment supporting our gradient-direction analysis. Our framework suggests that E2E supervision can attenuate the learning signal for residual directions that are important for preserving the next-layer K-cache, because these directions lie in a weakly responsive subspace of the E2E-induced metric. To test this implication, we augment the E2E objective with an auxiliary K-cache RMSE loss at each layer, thereby providing direct supervision on the key states. The resulting improvement over the vanilla E2E baseline indicates that explicitly preserving K-cache fidelity can partially recover reasoning performance. This supports our claim that the standard E2E objective alone does not sufficiently emphasize K-cache-preserving directions.
\section{Licenses and Terms of Use for Existing Assets}
\label{app:licenses}
\paragraph{Licenses for existing assets.}
Table~\ref{tab:asset_licenses} summarizes the licenses and terms of use for the existing assets used in this work, including models, training datasets, and evaluation benchmarks. We use these assets only for research purposes and follow the corresponding licenses, terms of use, and attribution requirements. 

\begin{table}[!ht]
\centering
\small
\caption{Licenses and terms of use for existing assets used in this work.}
\label{tab:asset_licenses}
\resizebox{\textwidth}{!}{%
\begin{tabular}{lll}
\toprule
Asset & Type & License / Terms of Use \\
\midrule
WikiText-2 & Dataset & CC BY-SA 4.0 \\
C4 & Dataset & ODC-BY; subject to Common Crawl Terms of Use \\
RedPajama & Dataset & Subset-dependent licenses; Common Crawl Terms of Use for web data \\
AIME24 & Benchmark & Apache-2.0 for the dataset source used \\
AIME25 & Benchmark & Apache-2.0 for the dataset source used \\
GPQA-Diamond & Benchmark & CC BY 4.0 \\
LSAT-AR & Benchmark & MIT \\
LiveCodeBench & Benchmark & MIT for the benchmark toolkit; original platform terms apply \\
OpenR1-Math-220k & Dataset & Apache-2.0 \\
OpenThoughts3 & Dataset & Apache-2.0 \\
MMLU-Pro & Benchmark & MIT \\
ARC-C / ARC-E & Benchmark & CC BY-SA 4.0 \\
BoolQ & Benchmark & CC BY-SA 3.0 \\
HellaSwag & Benchmark & MIT \\
Qwen3 series & Model & Apache-2.0 \\
Llama-3.1-8B & Model & Llama 3.1 Community License \\
DeepSeek-R1-Distill-Llama-8B & Model & MIT; derived from Llama-3.1-8B-Base under the Llama 3.1 license \\
\bottomrule
\end{tabular}%
}
\end{table}

\section{Limitations and Future Work}
\label{app:limitations}

LAQuant has several limitations worth acknowledging. At W3G128, a measurable gap to FP16 remains on competitive reasoning: 9.67, 8.28, and 5.53\,pp on the Qwen3 1.7B/4B/8B reasoning average (Table~\ref{tab:res_main_reasoning}). The gap shrinks with model scale, suggesting that low-bit reasoning quantization is hardest for smaller models, where each weight carries proportionally more task-relevant information.

Although Section~\ref{sec:calibdata} formalizes Hessian-subspace alignment as the diagnostic for matching calibration to deployment, the corpus we use (DeepSeek-R1 traces over OpenR1-Math-220k) was chosen empirically and validated post-hoc against the diagnostic. We do not provide a procedure for selecting or synthesizing calibration data given a target deployment distribution. The approach also implicitly assumes that a teacher trace generator (DeepSeek-R1 in our experiments) is available for the deployment domain; in domains lacking such a generator, calibration falls back to general text and the alignment guarantee weakens.

Compute constraints also limited model-scale coverage. We evaluate up to 8B parameters, matching ReasoningQAT's coverage but stopping one model size short of ParoQuant, whose original paper reports up to Qwen3-14B. Larger open reasoning models (DeepSeek-R1-distilled 70B, Qwen3-30B-A3B, Qwen3-235B-A22B) exceed our compute budget. Layer-wise QAT cost scales linearly with parameter count, so the method is tractable in principle, but verifying that the gradient-direction analysis transfers at scale is left to future work. The lookahead loss itself adds a training-time memory cost: the teacher's $\ell$ and $\ell{+}1$ activations must be held simultaneously, raising peak VRAM relative to plain layer-wise QAT (Table~\ref{tab:res_cost}).

Several extensions are natural follow-ups. Extending LAQuant to W$b$A$b$KV$b$ regimes would compound the inference benefits and could augment the implicit Jacobian weighting with an explicit K- and V-cache reconstruction term that directly targets the cache-writing projections rather than relying on the implicit attribution alone. The principal-angle diagnostic of Section~\ref{sec:calibdata} could likewise be operationalized as a constructive procedure: for example, greedy maximization of $\bar c_k(\mathcal{D}_{\mathrm{cal}}, \mathcal{D}_{\mathrm{eval}})$ over a candidate corpus, or Hessian-aware prompt synthesis when no suitable corpus exists. Quantization of mixture-of-experts models (e.g., Qwen3-30B-A3B, Qwen3-235B-A22B) is another open direction: per-expert calibration coverage interacts non-trivially with the Hessian-alignment diagnostic, since each expert's activation distribution depends on the router's pattern under the calibration corpus. Finally, Table~\ref{tab:method_ablation_lookahead} reports $k{=}1$ as the best uniform lookahead depth, but layers vary in downstream-Jacobian conditioning; allowing $k$ to vary per layer (e.g., based on a cheap spectral proxy of $J_{\ell+1}$) could recover the marginal gain that uniform $k{=}2$ does not deliver.

\newpage

\end{document}